\definecolor{yxc}{RGB}{255,0,0}
\definecolor{ytw}{RGB}{255,69,0}
\definecolor{gen}{RGB}{0,0,200}
\definecolor{zhh}{RGB}{200,200,0}
\renewcommand{\P}{\mathbb{P}}
\newcommand{\R}{\mathbb{R}}
\newcommand{\B}{\mathcal{B}}
\newcommand{\T}{\mathcal{T}}
\newcommand{\tr}{\mathrm{Tr}}
\newcommand{\M}{\mathcal{M}}
\renewcommand{\d}{\mathrm{d}}
\newcommand{\KL}{\mathsf{KL}}
\newcommand{\Pdata}{p_{\mathsf{data}}}
\newcommand{\E}{\mathbb{E}}
\newcommand{\N}{\mathcal{N}}
\newcommand{\m}{\,||\,}
\newcommand{\F}{\mathrm{F}}
\newtheorem{definition}{Definition}
\theoremstyle{plain} \newtheorem{lemma}{\textbf{Lemma}}\newtheorem{theorem}{\textbf{Theorem}}
\theoremstyle{assumption}\newtheorem{assumption}{\textbf{Assumption}}
\theoremstyle{remark}\newtheorem{remark}{\textbf{Remark}}
\theoremstyle{Corollary}\newtheorem{corollary}{\textbf{Corollary}}
\title{Denoising diffusion probabilistic models are \\ optimally adaptive to unknown low dimensionality}
\author{%
	Zhihan Huang\thanks{Department of Statistics and Data Science, the Wharton School, University of Pennsylvania; email: \texttt{\{zhihanh,ytwei,yuxinc\}@wharton.upenn.edu}.}
	 \and
Yuting Wei\footnotemark[1] 
\and
 Yuxin Chen\footnotemark[1]
}
\date{October 2024;~~Revised: Feburary, 2026}
\begin{document}

\maketitle 

\begin{abstract}
The denoising diffusion probabilistic model (DDPM) has emerged as a mainstream generative model in generative AI. While sharp convergence guarantees have been established for the DDPM, the iteration complexity is, in general,  proportional to the ambient data dimension, resulting in overly conservative theory that fails to explain its practical efficiency. This has motivated the recent work \cite{li2024adapting} to investigate how the DDPM can achieve sampling speed-ups through automatic exploitation of intrinsic low dimensionality of data.  We strengthen this line of work by demonstrating, in some sense, optimal adaptivity to  unknown low dimensionality. For a broad class of data distributions with intrinsic dimension $k$, we prove that the iteration complexity of the DDPM scales nearly linearly with $k$, which is optimal when using KL divergence to measure distributional discrepancy. 
%
\end{abstract}

\setcounter{tocdepth}{2}
\tableofcontents

\section{Introduction}
\label{sec:intro}


In generative modeling, one is asked to produce novel samples that resemble, in distribution, the data on which the model has been trained. In order to fully unleash its power, a generative modeling algorithm must exploit, either implicitly or explicitly, the distinctive features of the data distribution of interest, rather than merely optimizing for worst-case scenarios.  
A prominent example of such distinctive features is  the low-dimensional structure underlying the data of interest,  which permeates modern statistical modeling. For instance, image data often resides on some low-dimensional manifold \citep{simoncelli2001natural,pope2021intrinsic}, and genomics data might also be effectively modeled through manifolds \citep{moon2018manifold,zhu2018reconstructing}. 
In fact, the ImageNet dataset with ambient dimension $d \approx 10^4$ is estimated to have an intrinsic dimension of $k \approx 43$ \citep{pope2021intrinsic}, which is significantly smaller than the ambient dimension.
While harnessing these low-dimensional data structures is generally believed to enable more efficient data generation, 
the theoretical underpinnings for this capability in contemporary generative modeling algorithms remain vastly under-explored.

In this paper, we study how low-dimensional data structure influences the performance of diffusion models, an emerging paradigm that facilitates remarkable progress across various subfields of modern generative modeling \citep{chen2024overview,sohl2015deep,dhariwal2021diffusion,ho2022video,ramesh2022hierarchical,watson2023novo}. 
Empirical evidence suggests that diffusion models might not suffer from the curse of dimensionality.
Some of the empirical methods explicitly exploit dimension reduction designs like latent diffusion to leverage low dimensionality \citep{rombach2022high},  while in the meantime vanilla diffusion models also seem to exhibit adaptivity to unknown low-dimensional structure in practice. However, a rigorous understanding of the efficacy of standard diffusion models in the presence of low-dimensional structure remains elusive. 
Consider, for concreteness, a mainstream diffusion model known as the {\em denoising diffusion probabilistic model (DDPM)} \citep{ho2020denoising}, which has received widespread adoption in practice and sparked considerable interest from the theoretical community to demystify its efficacy (e.g., \citet{chen2022sampling,chen2022improved,lee2022convergence,lee2023convergence,li2024towards,tang2023diffusion,mbacke2023note,tang2024score,li2025dimension,liang2024non,li2024d}).  In order for the DDPM sampler to generate a $d$-dimensional new sample that approximates well the target distribution  in terms of KL divergence, 
the number of steps\footnote{Given that each step of the DDPM requires evaluating a score function (typically through computing the output of a large neural network or transformer), the sampling speed of the DDPM is largely dictated by the number of steps. } needs to scale linearly with $d$, a scaling that has been shown to be un-improvable in general \citep{benton2024nearly}. 
However, in application like image generation, the number of pixels in each image might oftentimes be exceedingly large, far exceeding the number of steps typically needed to run the DDPM in practice. This indicates that even linear scaling with $d$ does not adequately explain the observed efficiency of the DDPM in practice.

To bridge the theory-practice gap for the data generation stage,  
the seminal work by \citet{li2024adapting} demonstrated that DDPM can automatically adapt to the intrinsic dimension of the target distribution $\Pdata$, 
which rigorously justifies sampling speed-ups if the support of $\Pdata$ is intrinsically low-dimensional. 
As a particularly intriguing message of this prior result, the DDPM sampler, in its original form, achieves favorable adaptivity without explicitly modeling the low-dimensional structure.

Despite this remarkable theoretical progress, however, 
it was unclear to what extent the low-dimensional structure could enhance the sampling efficiency of the DDPM. More precisely, \citet{li2024adapting} proved that when the support of $\Pdata$ has intrinsic dimension $k$ (to be made precise in Section~\ref{sec:low-dim}),  the iteration complexity of the DDPM scales at most with $k^4$ (up to some logarithmic factor and other dependency on the target accuracy level), provided that KL divergence is used to measure the distributional discrepancy. 
This result was recently improved by \citet{azangulov2024convergence} 
to the order of $k^3$,  assuming that data lies within some low-dimensional smooth manifolds. 
This raises a natural question: is the $k^3$ scaling essential, and if not, how to sharpen it? Addressing this question would provide a quantitative understanding about DDPM's degree of adaptivity when tackling low-dimensional structure.

\paragraph{Main contributions.} 
In this work, we develop a sharp convergence theory for the DDPM in the face of unknown low dimensionality. For a broad family of target data distributions $\Pdata$ with intrinsic dimension $k$, 
we show that it takes DDPM at most $O(k/\varepsilon^2)$ (up to some logarithmic factor) steps to yield an $\varepsilon^2$-accurate distribution --- measured by the KL divergence --- assuming access to perfect score functions. 
This result, which shares similarity with a concurrent work by \citet{potaptchik2024linear} to be discussed momentarily,  improves upon the iteration complexity derived by  \citet{azangulov2024convergence} by a factor of $k^2$, and achieves nearly optimal scaling in $k$ without any burn-in requirement. 
Further, our convergence theory can be easily extended to accommodate imperfect score estimation.


To provide some intuition about its adaptivity,  
it is helpful to recognize that the original DDPM update rule by \citet{ho2020denoising} is precisely equivalent
 to running a carefully parameterized stochastic differential equation (SDE) upon discretization using the exponential integrator, 
 a key observation that appeared in the recent work by \citet{azangulov2024convergence}. 
 Crucially, the drift term of this discretized SDE is semi-linear, with the non-linear component proportional to the posterior mean of the data given its Gaussian corruption. Intuitively, when the support of $\Pdata$ is low-dimensional, this nonlinear drift term acts as a ``projection'' onto the low-dimensional manifold of interest,  harnessing this intrinsic structure to enhance the smoothness of the solution path and thereby speed up the sampling process. 
 Somewhat surprisingly, 
while DDPM was initially proposed to maximize a variational lower bound on the log-likelihood \citep{ho2020denoising}, the resulting update rule  inherently adopts the desirable time discretization schedule, in a way that is fully adaptive to unknown low-dimensional distributions. 
A key ingredient underlying our analysis of the discretized SDE lies in carefully calibrating the evolution of the posterior mean function along the forward process.

\paragraph{The concurrent work \citet{potaptchik2024linear}.}
While finalizing this work, we became aware of an interesting concurrent work \cite{potaptchik2024linear}, posted two weeks before our manuscript, that delivered the first convergence guarantees exhibiting linear dependency on the intrinsic manifold dimension of the data. 
Our paper, which stands as an independent and concurrent contribution, has similar aim and establishes similar results as \cite{potaptchik2024linear},  albeit under different assumptions on the low-dimensional structure.  We shall provide more detailed comparisons between our work and  \cite{potaptchik2024linear} in Section~\ref{sec:convergence-DDPM}.

\paragraph{Notation.} 
We introduce several notation to be used throughout. For any two functions $f$ and $g$, we adopt the notation $f \lesssim g$ or $f = O(g)$ (resp.~$f \gtrsim g$ or $f = \Omega(g)$) to mean that there exists some universal constant $C > 0$ such that $f \le C g$ (resp.~$f \ge Cg$). The notation $f \asymp g$ or $f = \Theta(g)$ then indicates that $f \lesssim g$ and $f \gtrsim g$ hold at once. Additionally,  $f = \widetilde{O}(g)$ (resp.~$f = \widetilde{\Theta}(g)$) is defined analogously except that the logarithmic dependency is hidden. 
%
We denote by $\|\cdot\|_\F$ the Frobenius norm of a matrix, and use $\|\cdot\|_2$ to denote the Euclidean norm of a vector. The notation $X \overset{\mathrm d}{=} Y$ indicates that the random objects $X$ and $Y$ are identical in distribution.


\section{Preliminaries}
\label{sec:background}


In this section, we review some basics about the original DDPM sampler, 
introduce a useful perspective from SDEs, and describe some fundamental properties about the (Stein) score function. 

\paragraph{The DDPM sampler.}  
The procedure of the DDPM sampler is described in the sequel. 
\begin{itemize}
	\item {\em Forward process.} Consider a forward process $(X_t)_{t\in [0,T]}$ generated such that
%
\begin{equation}
	X_{t} \overset{\mathrm{d}}{=} \sqrt{1-\sigma_{t}^{2}}X_{0}+\sigma_{t}W\qquad
	\text{with }\sigma_{t} \coloneqq \sqrt{1-e^{-2t}},
	\label{eq:forward-marginal}
\end{equation}
where $X_{0}\sim \Pdata$ and $W\sim\mathcal{N}(0,I_{d})$ are independently generated. In words, this forward process starts with a realization of $X_0$ drawn from the target data distribution $p_{\mathsf{data}}$, and injects increasingly more Gaussian noise as $t$ grows. Evidently, the distribution of $X_T$ becomes exceedingly close to $\mathcal{N}(0,I_d)$ for a large enough time horizon $T$. 

	\item {\em Data generation process.} In order to operate in discrete time, let us choose a set of $N$ time discretization points $0=t_0<t_1<\dots<t_N<t_{N+1}=T$ for some large enough $T$, and adopt the update rule below:\footnote{While the DDPM update rule was introduced in \citet{ho2020denoising} through the choices of $\{\alpha_{n}\}$ instead of the time discretization points $\{t_n\}$, it is equivalent to the version adopted herein. }   
\begin{align}
\begin{split}
	Y_{t_{0}} & \sim\mathcal{N}(0,I_{d}),\\
	Y_{t_{n+1}} & =\frac{1}{\sqrt{\alpha_{n}}}\big(Y_{t_{n}}+(1-\alpha_{n})\widehat{s}_{T-t_{n}}(Y_{t_{n}})\big)+\sqrt{\frac{(1-\alpha_{n})(1-\overline{\alpha}_{n+1})}{1-\overline{\alpha}_{n}}}Z_{n},
	\qquad n=0,\dots,N-1,
\end{split}
\label{eq:DDPM-update}
\end{align}
where  $\widehat{s}_{t}(\cdot)$ represents an estimate of the true score
function $s_{t}(\cdot) = \nabla\log p_{X_{t}}(\cdot)$ w.r.t.~the forward process \eqref{eq:forward-marginal}, 
and the $Z_{n}$'s are random vectors independently drawn from $Z_{n}\sim\mathcal{N}(0,I_{d})$. 
Here and throughout, the coefficients $\{\alpha_n\}$ and $\{\overline{\alpha}_n\}$ are taken to be
\begin{align}
	\alpha_n \coloneqq e^{-2(t_{n+1}-t_n)} \qquad \text{and} \qquad
	\overline{\alpha}_n \coloneqq \prod_{i=n}^N \alpha_i = e^{-2(T-t_n)},
	\label{eq:alpha-bar-n}
\end{align}
which are fully determined by the locations of the time discretization points. 
In a nutshell, each iteration in \eqref{eq:DDPM-update} computes the weighted superposition of the current iterate, its corresponding score estimate, and an independent Gaussian vector, 
		where the weights were originally proposed by \citet{ho2020denoising} to optimize a variational lower bound on the log-likelihood.

\end{itemize}

%

\paragraph{An SDE perspective.}  
The feasibility to reverse the forward process with the aid of score functions can be elucidated in the continuous-time limits, 
an approach popularized by \citet{song2020score}.  
To be more specific,  construct the forward process as the standard Ornstein-Uhlenbeck (OU) process \citep{oksendal2003stochastic}
\begin{align}
    \d X_t &= -X_t \d t + \sqrt{2} \,\d W_t, \quad X_0 \sim \Pdata, \quad  t \in [0,T] \label{eq:forward}
\end{align}
with $(W_t)$ the standard Brownian motion in $\mathbb{R}^d$.  
It is well-known this process \eqref{eq:forward} satisfies property \eqref{eq:forward-marginal}.  
Crucially, process \eqref{eq:forward} can be reversed through the following SDE with the aid of score functions: 
\begin{align}
    \d Y_t &= \big(Y_t + 2s_{T-t}(Y_t)\big)\d t + \sqrt{2} \,\d B_t,  \quad t \in [0,T], \label{eq:backward}
\end{align}
where $(B_t)$ represents another standard Brownian motion in $\mathbb{R}^d$ independent from $(X_t)$ and $(W_t)$. 
If $Y_0 \sim X_T$, then classical results in the SDE literature \citep{anderson1982reverse,haussmann1986time}  tell us that 
\begin{align}
Y_{T-t} \stackrel{\mathrm{d}}{=} X_t,\qquad 0\leq t\leq T
\label{eq:equiv-Y-X-reverse}
\end{align}
holds under fairly mild regularity conditions on $\Pdata$, thereby confirming the plausibility of score-based generative models. 
Note, however, that practical implementation of \eqref{eq:backward} has to operate in discrete time, 
and the sampling fidelity relies heavily upon the time discretization error.  
In fact, a large body of prior convergence theory for DDPM-type stochastic samplers (e.g., \citet{chen2022sampling,chen2023improved,benton2024nearly}) was established by studying a certain time-discretized version of \eqref{eq:backward}, with $s_{T-t}(\cdot)$ replaced by the score estimate $\widehat{s}_{T-t}(\cdot)$ to account for the effect of score estimation errors.

\paragraph{Score functions.} 
Before proceeding, let us take a moment to delve into the  score function defined as
\begin{equation}
	\label{eq:defn-score-function}
	s_t(x) \coloneqq \nabla \log q_{t}(x),
\end{equation}
where $q_{t}(\cdot)$ reprsents the marginal density function of $X_t$ w.r.t.~the Lebesgue measure.
By defining the posterior mean and covariance of $X_0$ given $X_t$ as follows:
\begin{align}
    \mu_t(x) \coloneqq \E[X_0 \mid X_t = x] \quad\text{ and }\quad \Sigma_t(x) \coloneqq \mathrm{Cov}[X_0 \mid X_t = x],\notag
\end{align}
one can readily derive from Tweedie's formula \citep{efron2011tweedie} that
\begin{align}
    \mu_t(X_t) = \frac{1}{\sqrt{1-\sigma_t^2}}\left(X_t + \sigma^2_t \nabla \log q_{t}(X_t)\right) = \frac{1}{\sqrt{1-\sigma_t^2}}\left(X_t + \sigma^2_t s_t(X_t)\right),\label{eq:tweedie}
\end{align}
where we recall that $\sigma_t=\sqrt{1-e^{-2t}}$. 
In other words, the score function is intimately connected with the posterior mean function. 
Throughout the paper, we shall denote by $\widehat{\mu}_t(\cdot)$ the estimate of the posterior mean $\mu_t(\cdot)$, 
which allows us to express the score estimate $\widehat{s}_t(\cdot)$ as 
\begin{align}
	\widehat{\mu}_t(X_t) = \frac{1}{\sqrt{1-\sigma_t^2}}\left(X_t + \sigma^2_t \widehat{s}_t(X_t)\right). 
	\label{eq:ep-mu-s}
\end{align}

%

\section{DDPM as an adaptively discretized SDE}
\label{sec:DDPM-SDE}

While previous studies (e.g., \citet{chen2022sampling,chen2023improved,benton2024nearly}) have explored the convergence properties of time discretization of the ideal reverse process \eqref{eq:backward} through the SDE framework, 
a dominant fraction of these results fell short of directly addressing the precise update rule \eqref{eq:DDPM-update} of the DDPM sampler. 
As pointed out by \citet{li2024adapting}, the coefficients (or weights) adopted in the DDPM-type sampler are critical for effective adaptation to unknown low dimensionality; in fact, even mild perturbation of the coefficients can lead to considerable performance degradation, a curious phenomenon that does not occur in the general full-dimensional setting.

In order to rigorously pin down the effectiveness of the DDPM update rule \eqref{eq:DDPM-update}, 
a key step is to recognize its equivalence to a carefully constructed SDE. To be precise, set 
\begin{align}
\label{eq:defn-eta-t}
\eta_t \coloneqq \frac{\sqrt{1-\sigma_t^2}}{\sigma_t^2} = \frac{e^{-t}}{1-e^{-2t}}
\end{align}
for notational convenience,  and let us introduce the following SDE~\eqref{eq:ddpm-SDE-adaptive} that is provably equivalent to the DDPM update rule.  
\begin{itemize}
\item {\bf An adaptively discretized reverse SDE.} Initialized at $Y_0 \sim \mathcal{N}(0,I_d)$, 
this SDE proceeds as
\begin{subequations}
\label{eq:ddpm-SDE-adaptive}
    \begin{align}
        \d Y_t &= \left( Y_t + 2\left(\frac{\eta_{T-t}}{\eta_{T-t_n}}\cdot\frac{1}{\sigma^2_{T-t_n}}Y_{t_n} - \frac{1}{\sigma^2_{T-t}}Y_t\right) +  \frac{2\eta_{T-t}}{\eta_{T-t_n}}\widehat{s}_{T-t_n}(Y_{t_n})\right)\d t + \sqrt{2}\,\d \B_t\label{eq:ddpm-s}\\
        &= \left(\left(1-\frac{2}{\sigma_{T-t}^2}\right)Y_t + \frac{2\sqrt{1-\sigma_{T-t}^2}}{\sigma^2_{T-t}}\widehat{\mu}_{T-t_n}(Y_{t_n})\right)\d t + \sqrt{2}\,\d B_t
        \label{eq:ddpm-mu}
    \end{align}
    in the $n$-th ($0\leq n< N$) time interval $t\in [t_{n},t_{n+1})$, where $(B_t)$ stands for a standard Brownian motion in $\mathbb{R}^d$.  
    Here, the equivalence of \eqref{eq:ddpm-s} and \eqref{eq:ddpm-mu} follows immediately from \eqref{eq:ep-mu-s}.
\end{subequations}
    
\end{itemize}

\begin{remark}
After the initial posting of this paper, the authors of the concurrent work \cite{potaptchik2024linear} informed us 
that a similar SDE already appeared in the recent work \citep[Section 7.2]{azangulov2024convergence} as a type of backward SDE discretization with first-order score correction, where they discussed the motivation of score correction and its effect upon the discretization error. 
See also \citet[Page~90]{azangulov2024convergence} for their justification of the SDE equivalence of the DDPM. 
\end{remark}





Importantly, SDE~\eqref{eq:ddpm-SDE-adaptive} is a continuous interpolation of the DDPM update rule~\eqref{eq:DDPM-update}, which means that solving the above SDE~\eqref{eq:ddpm-SDE-adaptive} leads to exactly the original DDPM sampler, as asserted by the following theorem.
\begin{theorem}\label{thm:ddpm}
    Let $(Y_t)_{t\in [0,T]}$ be the solution to the SDE~\eqref{eq:ddpm-SDE-adaptive}. 
    Then for any $n=0,\ldots,N-1$, one can write 
    \begin{align*}
	Y_{t_{n+1}} & =\frac{1}{\sqrt{\alpha_{n}}}\big(Y_{t_{n}}+(1-\alpha_{n})\widehat{s}_{T-t_{n}}(Y_{t_{n}})\big)+\sqrt{\frac{(1-\alpha_{n})(1-\overline{\alpha}_{n+1})}{1-\overline{\alpha}_{n}}}\widetilde{Z}_{n}
\end{align*}
    for statistically independent random vectors $\{\widetilde{Z}_{n}\}$ in $\mathbb{R}^d$ obeying $\widetilde{Z}_{n}\sim \mathcal{N}(0,I_d)$, which recovers the DDPM sampler~\eqref{eq:DDPM-update}.
\end{theorem}
\noindent 
The proof of Theorem~\ref{thm:ddpm} is postponed to Appendix~\ref{sec:proof-prop:ddpm}.




Recall that in the general full-dimensional settings, 
previous work (e.g., \citet{chen2023improved}) suggested that discretizing different parametrization of \eqref{eq:backward} typically results in the same order of discretization errors, indicating that distinguishing between them might be unnecessary. This message, however, does not hold in the presence of low-dimensional structure, which merits further discussions as follows.


%
\begin{itemize}
    \item {\em A reparameterized semi-linear SDE.} As can be readily seen, if one has access to the true score function (i.e., $\widehat{s}_{t}(\cdot) = s_{t}(\cdot)$), then for some $\delta > 0$, SDE~\eqref{eq:ddpm-mu} can be viewed as the discretization of the true reverse SDE in the time horizon $[0,T-\delta]$ under the exponential integrator scheme:
\begin{align}
    \d Y_t &= \big(Y_t + 2s_{T-t}(Y_t)\big)\d t + \sqrt{2}\, \d B_t\notag\\
    &= \Bigg(\left(1-\frac{2}{\sigma_{T-t}^2}\right)Y_t + \frac{2\sqrt{1-\sigma_{T-t}^2}}{\sigma^2_{T-t}}\mu_{T-t}(Y_{t})\Bigg)\d t + \sqrt{2}\,\d B_t, \label{eq:ddpm-mu-oracle}
\end{align}
where the last line invokes the Tweedie formula \eqref{eq:tweedie}.
Here, the stopping time $\delta$ is introduced to avoid the singularity of the drift term when $t$ approaches $T$.
In other words, the key is to rearrange the drift term of the reverse process \eqref{eq:backward} into a different semi-linear form, with the nonlinear component reflecting the posterior mean rather than the score. 

    \item {\em Adaptation to data geometry.} 
    To elucidate why the semi-linear structure in \eqref{eq:ddpm-mu-oracle} favors low-dimensional data structure, note that the discretization error depends heavily upon the smoothness of the non-linear drift term of the reparameterzied semi-linear SDE along the solution path.  In \eqref{eq:ddpm-mu-oracle}, the non-linear drift term is proportional to the posterior mean of $X_0$ given $X_{T-t}$; informally, the posterior mean term acts as some sort of projection operator onto the support of the target data, 
    which effectively exploits the (unknown) low-dimensional data geometry to enhance the smoothness of the solution path.

    \item {\em Parameterization matters.} 
    It is worth noting that 
    how one re-parameterizes the drift term of \eqref{eq:backward} --- namely, how one rearranges the semi-linear drift term therein --- can exert significant influences upon the sampling efficiency. 
    This can be elucidated via the inspiring example 
    singled out in \citet[Theorem~2]{li2024adapting}. More concretely,  when the target data distribution is isotropic Gaussian on a $k$-dimension linear subspace ($k\ll d$), the iteration complexity of the DDPM sampler~\eqref{eq:DDPM-update} is proportional to $O(\mathrm{poly}(k))$ \citep{li2024adapting}, 
    but most of other DDPM-type update rules (i.e., the ones with different coefficients than the original DDPM sampler) can take  $\Omega(\mathrm{poly}(d))$ steps to yield the target sampling quality. 
    Informally speaking, 
    parameterizing the semi-linear drift term differently might result in a solution within $O(d/\mathrm{poly}(N))$ KL divergence from the DDPM solution. \footnote{Roughly speaking, in view of the Taylor expansion, we know that the discrepancy between different solutions is of order $O(1/\mathrm{poly}(N))$; but since $d$-dimensional Gaussian noise is added in each step of the sampling procedure, this discrepancy can result in $O(d/\mathrm{poly}(N))$ difference in KL divergence per step.} 
		Note that this discrepancy is negligible in the full-dimensional setting, but could dominate the sampling error when $k$ is exceedingly small compared to $d$ and when $N$ is approximately proportional to $k$.\footnote{This order of $N$ coincides with the number of iterations we suggest in Corollary~\ref{cor:main} for the DDPM sampler (up to logarithmic factor).}     
    All this underscores the critical importance of identifying a parameterization that adapts naturally to the intrinsic data geometry.




\end{itemize}

\section{Main results}
\label{sec:main-results}



Armed with the equivalence between the original DDPM sampler and the SDE~\eqref{eq:ddpm-SDE-adaptive} (cf.~Theorem~\ref{thm:ddpm}), 
we can proceed to establish a sharp convergence theory for the DDPM sampler, leveraging the toolbox from the SDE literature. 
Before proceeding to our main theory, let us begin by imposing a couple of key assumptions.

\subsection{Assumptions and key quantities}

\subsubsection{Low dimensionality of the target distribution}\label{sec:low-dim}

To introduce the intrinsic dimensionality of the target data distribution $\Pdata$, 
we take a look at how complex the support of $\Pdata$ is. Here and throughout, we denote by 
$\mathcal{X}_{\mathsf{data}} \in \R^d$  the support of  $\Pdata$, i.e., the closure of the intersection of all the sets $\mathcal{X}' \in \R^d$ such that $\P_{X_0\sim \Pdata}(X_0 \in \mathcal{X}') = 1$.

Given that we focus on data residing in the Euclidean space, 
we can measure the complexity through the Euclidean metric entropy below defined in \citet[Chapter 5]{wainwright2019high}:

\begin{definition}[Covering number and metric entropy]
    For any set $\mathcal{M}\subseteq \mathbb{R}^d$,  
    the Euclidean covering number at scale $\varepsilon_0 > 0$, denoted by $N^{\mathsf{cover}}(\mathcal{M},\|\cdot\|_2,\varepsilon_0)$,  is the smallest $s$ such that there exist points $x_1, \dots , x_s$ obeying
    \begin{align*}
        \mathcal{M} \subseteq \bigcup_{i=1}^s B(x_i,\varepsilon_0),
    \end{align*}
    where $B(x_i,\varepsilon_0) \coloneqq \{x \in \mathbb{R}^d \mid \|x-x_i\|_2 \leq \varepsilon_0\}$. The metric entropy 
    is defined as $\log N^{\mathsf{cover}}(\mathcal{M},\|\cdot\|_2,\varepsilon_0)$.
\end{definition}

Following \cite{li2024adapting}, we make the following assumption to characterize the complexity of the data distribution, which generalizes the data generation assumption in \citet{li2024adapting}.
We shall often refer to $k$ as the intrinsic dimension of $\mathcal{X}_{\mathsf{data}}$. 
\begin{assumption}[Intrinsic dimension]\label{ass:low-dim}
    Suppose that $C_0>0$ is some sufficiently large universal constant. For $\varepsilon_0 = k^{-C_0}$, the metric entropy of the support $\mathcal{X}_{\mathsf{data}}$ of the data distribution $\Pdata$ satisfies
    \begin{align*}
        \log N^{\mathsf{cover}}(\mathcal{X}_{\mathsf{data}},\|\cdot\|_2,\varepsilon_0) \le C_{\mathsf{cover}} k \log \frac{1}{\varepsilon_0}
    \end{align*}
    for some universal constant $C_{\mathsf{cover}}>0$.
\end{assumption}



Next, we adopt a fairly mild assumption regarding the boundedness of the support $\mathcal{X}_{\mathsf{data}}$, allowing the size of the data to scale polynomially in $k$. 
\begin{assumption}\label{ass:bounded}
    Suppose that there exists some universal constant $C_R > 0$ such that
    \begin{align*}
        \sup_{x \in \mathcal{X}_{\mathsf{data}}} \|x\|_2 \le R, \qquad \mathrm{where} \enspace  R \coloneq k^{C_R}.
    \end{align*}
\end{assumption}
\begin{remark}Note that in Assumption~\ref{ass:bounded},  $C_R$ is allowed to be arbitrarily large, as long as it is a numerical constant. This means that $R$ can be an arbitrarily large polynomial in $k$. \end{remark}

To help interpret the above assumptions, 
we single out several important examples below.

%

\begin{itemize}
\item  {\em Linear subspace.} The metric entropy of a bounded subset of any $k$-dimensional linear subspace is at most $O(k\log (1/\varepsilon_0))$ \citep[Section 4.2.1]{vershynin2018high}, and hence  Assumption~\ref{ass:low-dim} covers linear subspaces studied by some prior work (e.g., \citet{chen2023score}). 
%

\item {\em Low-dimensional manifold.} 
Assumption~\ref{ass:low-dim} covers various manifolds under mild regularity conditions, which have been commonly used in prior work to model low-dimensional structure. 
To make it precise, let us first introduce some geometric quantities of the manifold, whose geometric interpretations can be found in \citet[Chapter 6]{lee2018introduction}. 
Suppose $\M \subseteq \R^d$ to be an isometrically embedded, compact, $k$-dimensional submanifold, then we can define:  
\begin{definition}\label{def:reach}
    The reach of $\M$, denoted by \textcolor{black}{$\tau_\M$}, is the distance between $\M$ and $\mathrm{Med}(\M)$,  where 
    \begin{align*}
        \mathrm{Med}(\M) \coloneqq \big\{x \in \R^d \mid \mathrm{there} \,\, \mathrm{exists} \,\, p \neq q \in \M  \,\, \mathrm{such}  \,\, \mathrm{that} \,\,\|p-x\|_2 = \|q-x\|_2 = \mathsf{dist}(x,\M) \big\},
    \end{align*}
and $\mathsf{dist}(x,\M)$ denotes the Euclidean distance of $x$ to $\M$. Also, $\mathsf{vol}(\M)$ is the volume, or equivalently, the $k$-dimensional Hausdorff measure, of the manifold $\M$.
\end{definition}
\begin{figure}[H]
    \centering
    \includegraphics[width=0.5\textwidth]{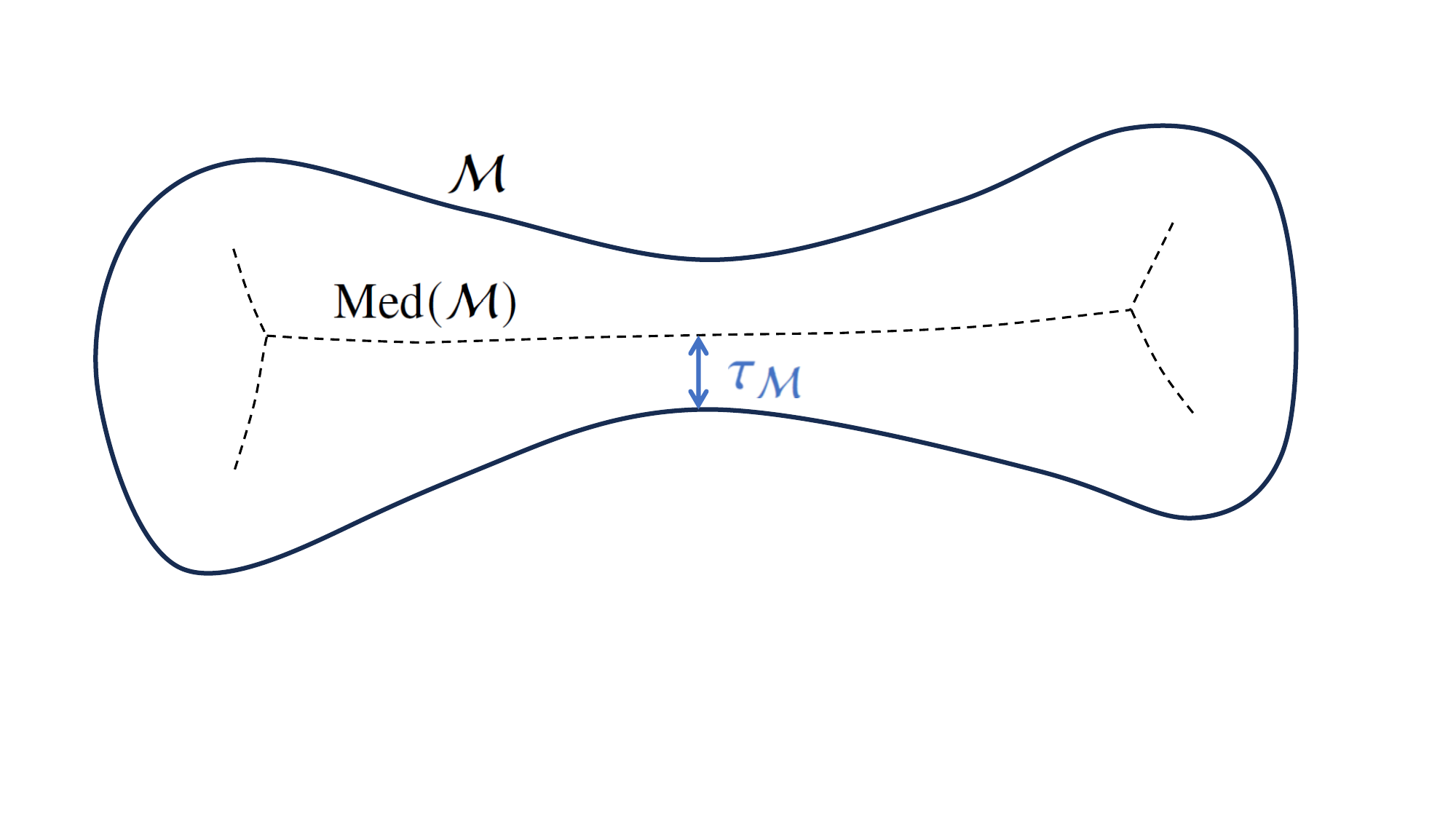}
    \caption{Illustration of reach $\tau_{\M}$ of a manifold $\M$ embedded in $\mathbb{R}^2$.}
    \label{fig:reach-diagram}
\end{figure}
Roughly speaking, the reach of a manifold measures the scale of which the manifold can be regarded as a linear subspace.
An illustration of the reach is provided in Figure~\ref{fig:reach-diagram}.
The larger the reach $\tau_\M$ is, the more regular the manifold appears.
The volume of the manifold $\M$ is a measure of the size of the manifold, which generalizes the concept of volume in 3-dimensional Euclidean space.
Instead of requiring data to be strictly supported on the manifold, Assumption~\ref{ass:low-dim} also allows the data to lie in the $\varepsilon_0$-enlargement of the manifold $\M$ defined as 
$\M^{\varepsilon_0} := \{x \in \R^d \mid \mathsf{dist}(x,\M) \le \varepsilon_0\}$.
This is stated formally in the following lemma, with the proof deferred to Appendix~\ref{sec:proof-prop:manifold}. 
%
\begin{lemma}\label{prop:manifold}
    If the support of the target data distribution $\Pdata$ is contained within the manifold $\M^{\varepsilon_0}$, i.e., $\mathcal{X}_{\mathsf{data}} \subseteq \M^{\varepsilon_0}$,and if 
    $\tau_\M = \Omega(k^{-C_1})$ and $\mathsf{vol}(\M) = O(k^{C_2})$ for some universal constants $C_1,C_2>0$, then Assumption~\ref{ass:low-dim} holds.
\end{lemma}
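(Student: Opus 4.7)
The plan is to combine the classical covering-number bound for a compact submanifold of positive reach with a simple triangle-inequality argument that handles the $\varepsilon_0$-enlargement. The key ingredient is the standard manifold covering bound: for any compact $k$-dimensional submanifold $\M \subseteq \R^d$ with reach $\tau_\M$ and any $0 < r < \tau_\M/2$,
\[
N^{\mathsf{cover}}(\M, \|\cdot\|_2, r) \;\lesssim\; \frac{\mathsf{vol}(\M)}{c_k\, r^k},
\]
where $c_k$ denotes the volume of the unit $k$-ball, so $\log(1/c_k) = O(k \log k)$. Following the Niyogi--Smale--Weinberger argument, for $r < \tau_\M/2$ the reach condition makes $\M$ approximately flat at scale $r$, so $\M \cap B(p, r/2)$ has $k$-dimensional Hausdorff measure at least $c_k' r^k$ for every $p \in \M$. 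A maximal $r$-packing $\{p_i\}$ then yields disjoint patches in $\M$ whose total volume is at most $\mathsf{vol}(\M)$, so the packing (and hence covering) number is bounded by $\mathsf{vol}(\M)/(c_k' r^k)$.

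Specializing to $r = \varepsilon_0 = k^{-C_0}$ and using $\tau_\M = \Omega(k^{-C_1})$ with $C_0 > C_1 + 1$ ensures $\varepsilon_0 < \tau_\M/2$ for all large $k$, so the bound applies; taking logarithms and invoking $\mathsf{vol}(\M) = O(k^{C_2})$ gives
\[
\log N^{\mathsf{cover}}(\M, \|\cdot\|_2, \varepsilon_0) \;\lesssim\; k \log(1/\varepsilon_0) + O(k \log k) + C_2 \log k \;\lesssim\; k \log(1/\varepsilon_0),
\]
since $\log(1/\varepsilon_0) = C_0 \log k$ dominates the lower-order terms whenever $C_0$ is a sufficiently large universal constant. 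To transfer this bound to $\mathcal{X}_{\mathsf{data}} \subseteq \M^{\varepsilon_0}$, observe that any $\varepsilon_0$-cover $\{p_i\}$ of $\M$ is automatically a $(2\varepsilon_0)$-cover of $\M^{\varepsilon_0}$: for $x \in \M^{\varepsilon_0}$ with metric projection $\pi(x)$ onto $\M$, pick $p_i$ with $\|\pi(x) - p_i\|_2 \le \varepsilon_0$, and the triangle inequality yields $\|x - p_i\|_2 \le \|x - \pi(x)\|_2 + \|\pi(x) - p_i\|_2 \le 2\varepsilon_0$. Because $\varepsilon_0 = k^{-C_0}$ in Assumption~\ref{ass:low-dim} accommodates constant-factor rescaling (via an adjustment of $C_0$, using $\log(1/(2\varepsilon_0)) \asymp \log(1/\varepsilon_0)$), this $(2\varepsilon_0)$-cover delivers exactly the desired bound $\log N^{\mathsf{cover}}(\mathcal{X}_{\mathsf{data}}, \|\cdot\|_2, \varepsilon_0) \le C_{\mathsf{cover}} k \log(1/\varepsilon_0)$.

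The principal technical obstacle lies in the manifold covering bound itself: it requires quantitatively invoking the reach $\tau_\M$ to guarantee that small Euclidean balls intersect $\M$ in pieces of near-maximal $k$-dimensional volume, and then carefully tracking that the ball-volume constant $1/c_k$ (which scales like $k^{k/2}$) contributes only $O(k \log k)$ to the log of the cover number, so that it remains dominated by $k \log(1/\varepsilon_0) = k C_0 \log k$ for $C_0$ large. The enlargement step is comparatively routine, amounting to a triangle inequality together with a constant-factor scale adjustment.
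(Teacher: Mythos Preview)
Your proposal is correct and follows essentially the same approach as the paper: invoke the standard manifold covering bound $N^{\mathsf{cover}}(\M,\|\cdot\|_2,r)\lesssim \mathsf{vol}(\M)/(\omega_k r^k)$ valid for $r$ below the reach, take logarithms to get the $k\log(1/\varepsilon_0)$ scaling, and pass to the $\varepsilon_0$-enlargement by a triangle-inequality/scale-halving step. The only cosmetic differences are that the paper cites this covering bound as a black-box lemma (from \citet{block2022intrinsic}) whereas you sketch the Niyogi--Smale--Weinberger packing argument, and that the paper covers $\M$ at scale $\varepsilon_0/2$ to obtain a cover of $\M^{\varepsilon_0}$ while you cover $\M$ at scale $\varepsilon_0$ and accept a $2\varepsilon_0$-cover of the enlargement; both variants are equivalent up to the harmless constant you correctly note is absorbed into $C_0$.
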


\item {\em A general set with doubling dimension $k$.} Another notion for intrinsic dimension used in statistics for a general set in $\mathbb{R}^d$ is the doubling dimension \citep{dasgupta2008random}, as defined below. 
\begin{definition}[Doubling dimension]
    The doubling dimension of $\mathcal{X}_{\mathsf{data}}$ is the smallest number $s$ such that for all $x \in \mathcal{X}_{\mathsf{data}}$ and $r > 0$, $B(x,r)\cap \mathcal{X}_{\mathsf{data}}$ can be covered by $2^s$ Euclidean balls with radius $r/2$.
\end{definition}
This notion of dimension can also be incorporated into our framework, as demonstrated by the following lemma \citep[Lemma 6]{kpotufe2012tree}:
\begin{lemma}\label{prop:doubling-dimension}
    If $\mathcal{X}_{\mathsf{data}}$ is contained within a $\varepsilon_0$-enlargement of a set that has doubling dimension $k$ and if Assumption~\ref{ass:bounded} holds, then Assumption~\ref{ass:low-dim} holds.
\end{lemma}

\item{\em A general set with Minkowski dimension $k$.}
As a commonly used concept of intrinsic dimension, the Minkowski dimension is defined as follows \citep{xia2010transport}. 
\begin{definition}[Minkowski dimension]
    The Minkowski dimension of $\mathcal{X}_{\mathsf{data}}$ can be defined as
    \begin{align*}
        \lim_{\varepsilon \to 0^{+}} - \frac{\log N^{\mathsf{cover}}(\mathcal{X}_{\mathsf{data}},\|\cdot\|_2,\varepsilon)}{\log \varepsilon},
    \end{align*}
	provided that the above limit exists. 
\end{definition} 
Interestingly, our Assumption~\ref{ass:low-dim} can be regarded as a weaker version of the Minkowski dimension, as stated in the following lemma.
\begin{lemma}\label{prop:Minkowski-dimension}
    If $\mathcal{X}_{\mathsf{data}}$ is contained within an $\varepsilon_0$-enlargement of a set that has Minkowski dimension $k$, then Assumption~\ref{ass:low-dim} holds.
\end{lemma}
\begin{remark}
	In some literature \citep{falconer2023minkowski}, the Minkowski dimension is defined through the limit scale of the box-covering number (denoted by $N^{\mathsf{box}\text{-}\mathsf{cover}}$) or the packing number (denoted by $N^{\mathsf{pack}}$), instead of the ball-covering number (denoted by $N^{\mathsf{cover}}$) used here.
	We remark here that all three definitions of the Minkowski dimension are equivalent, and Lemma~\ref{prop:Minkowski-dimension} holds (with only pre-constant modified) as long as $d = O(\mathrm{poly}(k))$, which can be readily seen from the following inequalities that hold for any $\varepsilon > 0$: 
    \begin{align*}
	    N^{\mathsf{box}\text{-}\mathsf{cover}}(\mathcal{X}_{\mathsf{data}},\|\cdot\|_2,\varepsilon) &\le N^{\mathsf{cover}}(\mathcal{X}_{\mathsf{data}},\|\cdot\|_2,\varepsilon/2) \le N^{\mathsf{box}\text{-}\mathsf{cover}}(\mathcal{X}_{\mathsf{data}},\|\cdot\|_2,d^{-1/2}\varepsilon),\\
	    N^{\mathsf{pack}}(\mathcal{X}_{\mathsf{data}},\|\cdot\|_2,\varepsilon) &\le N^{\mathsf{cover}}(\mathcal{X}_{\mathsf{data}},\|\cdot\|_2,\varepsilon/2) \le N^{\mathsf{pack}}(\mathcal{X}_{\mathsf{data}},\|\cdot\|_2,\varepsilon/4).
    \end{align*}
    In sum, our results remain valid when our intrinsic dimension is replaced by the Minkowski dimension.  
\end{remark}

\item {\em Sparse vectors.}
The concept of sparsity is widely used in high-dimensional statistics to characterize low-dimensional structures.
We say a vector $x \in \R^d$ is $s$-sparse if it has at most $s$ non-zero entries.
Denoting $\mathcal{S}^{d-1} = \{x \in \R^d \mid \|x\|_2 = 1\}$ to be the unit Euclidean sphere in $\R^d$, it is known that sparse vectors on $\mathcal{S}^{d-1}$ has low metric entropy~\citep{vershynin2009role}, which implies the following lemma.

\begin{lemma}
    Suppose $s < d/2$, and choose $k = s\log(d)$ in Assumption~\ref{ass:low-dim}.
    If $\mathcal{X}_{\mathsf{data}}$ is contained within an $\varepsilon_0$-enlargement of the set of $s$-sparse vectors in $\mathcal{S}^{d-1}$, then Assumption~\ref{ass:low-dim} holds.
\end{lemma}

\item {\em Union of the above sets.} 
Our framework can also cover a union of the structures mentioned above, that is, 
Assumption~\ref{ass:low-dim} automatically holds when the data are supported on the union of $k^{O(k)}$ subsets,   each of which satisfies Assumption~\ref{ass:low-dim}.
For the ImageNet dataset, the intrinsic dimension is estimated to be $k \approx 43$ by means of the Levina-Bickel method \citep{levina2004maximum}, which shares similar flavor as the doubling dimension mentioned above.
In this case, the number of subsets allowed in our framework could be as large as $10^{70}$, which manifests the flexibility and scalability of our analysis.

\end{itemize}


\subsubsection{Discretization time points}
%
%
Recall that the discretization time points are $0 = t_0  < t_1 < \cdots < t_N <t_{N+1} = T$, with $T$ the time horizon of the forward process.   
We also define the early stopping time
%
\begin{align}
    \label{eq:defn-Delta-k-interval}
 \delta \coloneqq 
 T-t_{N}<1,
\end{align}
%
the meaning of which shall be discussed momentarily when presenting our convergence theory. 



Following prior diffusion model theory (e.g., \cite{benton2024nearly}), 
we introduce a key quantity,  $\kappa$, defined to be the smallest quantity obeying
\begin{align}
\label{eq:defn-kappa}
    t_{n + 1} - t_n
    \le \kappa \min\{1,T-t_n\} 
    \qquad 
    \text{for all }0\leq n \leq N. 
\end{align}
To get a sense of the size of $\kappa$, it is helpful to bear in mind the following schedule that has been adopted in recent theoretical analysis \citep{chen2023improved,benton2024nearly,li2024towards}:
\begin{align}
t_{n} & =\begin{cases}
\frac{2(T-1)n}{N},\qquad & \text{ if }0\leq n\leq N/2,\\
T-\delta^{2(n-N/2)/N}, & ~\text{if }N/2<n\leq N, 
\end{cases}\label{eq:constant-exponential-stepsize}
\end{align}
where we assume  $N$ to be even for simplicity. 
In words, this schedule of time points can be divided into two phases: the first phase consists of the first $N/2$ time instances that are linearly spaced within $[0,T-1]$, 
whereas the second phase consists of the remaining $N/2$ time points that whose distance to $T$ decays exponentially from 1 to $\delta$. 
As shown in  \citet[Corollary~1]{benton2024nearly}, this choice \eqref{eq:constant-exponential-stepsize} satisfies
\begin{equation}
\kappa\asymp\frac{T+\log(1/\delta)}{N}.\label{eq:size-kappa}
\end{equation}
For this reason, we can often interpret $\kappa$ as the average time interval, or simply  $\widetilde{O}(1/N)$ as $T$ is typically taken to be $\widetilde{O}(1)$.  
In some other diffusion model literature, a uniform discretization schedule with rescaled forward and backward process is considered. 
Our analysis techniques can be easily adapted to this setting, and the following results, like Corollary~\ref{cor:main}, remain valid.




\subsubsection{Score estimation errors}
Given that typically only imperfect score estimates are available, 
we introduce another assumption concerning the  mean squared score estimation errors.

%
%
\begin{assumption}\label{ass:score-error}
Assume the weighted sum of the mean squared score estimation errors is finite, i.e.,  
\begin{align*}
    \sum_{n=0}^{N-1}
    (t_{n + 1} - t_n)\,
    \E\left[\big\|s_{T-t_n}(X_{T-t_n}) - \widehat{s}_{T-t_n} (X_{T-t_n})\big\|_2^2\right]  \leq \varepsilon_{\mathsf{score}}^2 < \infty,
\end{align*}
where the expectation is w.r.t.~the distribution of the forward process (cf.~\eqref{eq:forward-marginal}). 
\end{assumption}
Treating the score learning phase as a black box, 
Assumption~\ref{ass:score-error} 
singles out a single quantity,  $\varepsilon_{\mathsf{score}}\geq 0$, that captures the accuracy of the pre-trained score functions, 
which is commonly used in recent development of diffusion model theory \citep{chen2023improved,li2023towards,li2024sharp,benton2024nearly,dou2024optimal,li2024d,li2024accelerating,li2024adapting,potaptchik2024linear,li2024improved,azangulov2024convergence,liang2024non}. 
Notably, assuming a weighted average of the mean squared score errors is clearly less stringent than requiring the mean squared score estimation errors to hold uniformly over all time discretization points \citep{chen2022sampling,lee2022convergence,chen2023probability,wu2024stochastic}.



\subsection{Convergence guarantees for DDPM}
\label{sec:convergence-DDPM}
Armed with the aforementioned key assumptions, 
we are positioned to present our non-asymptotic convergence theory for the DDPM sampler~\eqref{eq:DDPM-update}. 
Here and throughout, we adopt the following convenient notation: 
\begin{itemize}
    \item $q_t$: the distribution of $X_t$ of the forward process \eqref{eq:forward-marginal};
    \item $p_t$: the distribution of $Y_t$ of the SDE specified by \eqref{eq:ddpm-mu}; 
    \item $p_{\mathsf{out}}$: the distribution of $Y_{t_N}$ (or equivalently, the distribution of the DDPM output after $N$ steps).  
\end{itemize}
%
%
\noindent 
Our convergence theory is stated below, 
whose proof is deferred to Section~\ref{sec:pf-thm-main}. 
\begin{theorem}\label{thm:main}
    Suppose that Assumptions~\ref{ass:low-dim}, \ref{ass:bounded} and \ref{ass:score-error} hold. Assume that $T > 1$, $\kappa \le 0.9$, $1/\mathrm{poly}(k) \lesssim \delta<1$ and $d = O\big(\mathrm{poly}(k)\big)$.
    Then the DDPM sampler~\eqref{eq:DDPM-update} 
    satisfies
    \begin{align}
        \mathsf{KL}(q_\delta \m p_{\mathsf{out}}) \lesssim \kappa \min\left\{\E\big[\|X_0\|_2^2\big],T k\log k\right\} + \kappa^2Nk\log k + \varepsilon_{\mathsf{score}}^2 + \big(d + \E\big[\|X_0\|_2^2\big]\big) e^{-2T}.
        \label{eq:KL-main-result}
    \end{align}
    %
\end{theorem}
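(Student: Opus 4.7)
}
The overall strategy is to leverage Theorem~\ref{thm:ddpm} to identify the DDPM iterates with the path of the adaptively discretized SDE~\eqref{eq:ddpm-SDE-adaptive} and then compare this path against the ideal reverse SDE~\eqref{eq:ddpm-mu-oracle} via Girsanov's theorem. Writing $Q$ for the path law of the time-reversed forward process $(X_{T-t})_{t\in[0,T]}$ and $P$ for the sampler's, the two SDEs share the same diffusion coefficient $\sqrt{2}I_d$, and inspecting the drifts in~\eqref{eq:ddpm-mu} and~\eqref{eq:ddpm-mu-oracle} shows that their difference on $[t_n,t_{n+1})$ equals $2\eta_{T-t}(\mu_{T-t}(Y_t)-\widehat\mu_{T-t_n}(Y_{t_n}))$. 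Combining the data-processing inequality with Girsanov then yields
\begin{align*}
\KL(q_\delta\m p_{\mathsf{output}}) \;\lesssim\; \KL(q_T\m\N(0,I_d)) \;+\; \sum_{n=0}^{N-1}\int_{t_n}^{t_{n+1}}\eta_{T-t}^2\,\E\big\|\mu_{T-t}(X_{T-t})-\widehat\mu_{T-t_n}(X_{T-t_n})\big\|_2^2\,\d t,
\end{align*}
with the initialization piece controlled by standard OU mixing, producing the $(d+\E\|X_0\|_2^2)e^{-2T}$ term.

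Splitting the integrand via the triangle inequality produces a score-estimation contribution $\|\mu_{T-t_n}(X_{T-t_n})-\widehat\mu_{T-t_n}(X_{T-t_n})\|_2^2$ and a discretization contribution $\|\mu_{T-t}(X_{T-t})-\mu_{T-t_n}(X_{T-t_n})\|_2^2$. For the score-estimation piece, identity~\eqref{eq:ep-mu-s} converts $\mu-\widehat\mu$ at time $T-t_n$ into $\eta_{T-t_n}^{-1}(s_{T-t_n}-\widehat s_{T-t_n})$; since $\eta_{T-t}/\eta_{T-t_n}$ is of constant order on each mesh interval under $\kappa\le 0.9$, this contribution collapses to the weighted sum in Assumption~\ref{ass:score-error}, delivering the $\varepsilon_{\mathsf{score}}^2$ term.

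The discretization contribution is where low-dimensional adaptivity enters. The plan is to recognize $M_s:=\mu_s(X_s)$ as a reverse-time martingale along the forward filtration---the Markov chain $X_0\to X_s\to X_t$ for $s<t$ gives $\E[M_s\mymid X_t]=M_t$---and then apply It\^{o}'s formula to the time-reversed process together with the second-order Tweedie identity $\nabla\mu_u(x)=\eta_u\,\Sigma_u(x)$, where $\Sigma_u(x):=\mathrm{Cov}(X_0\mymid X_u=x)$. This produces the exact quadratic-variation formula
\begin{align*}
\E\|\mu_{s_1}(X_{s_1})-\mu_{s_2}(X_{s_2})\|_2^2 \;=\; 2\int_{s_1}^{s_2}\eta_u^2\,\E\|\Sigma_u(X_u)\|_\F^2\,\d u, \qquad s_1<s_2.
\end{align*}
Plugging this into the discretization sum, bounding each integrand by $\E\|\mu_{T-t_{n+1}}(X_{T-t_{n+1}})-\mu_{T-t_n}(X_{T-t_n})\|_2^2$ (Pythagoras for the reverse martingale), and carrying out careful bookkeeping of mesh quantities such as $\int_{t_n}^{t_{n+1}}\eta_{T-t}^2\,\d t$, one obtains two complementary estimates: telescoping the total-variance identity $\sum_n\E\|M_{T-t_{n+1}}-M_{T-t_n}\|_2^2\le\E\|X_0\|_2^2$ yields the $\kappa\,\E\|X_0\|_2^2$ branch of the minimum, while substituting the pointwise estimate $\eta_u^2\,\E\|\Sigma_u(X_u)\|_\F^2\lesssim k\log k$ produces the complementary $\kappa Tk\log k$ branch together with the higher-order remainder $\kappa^2Nk\log k$.

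The main obstacle is establishing the pointwise posterior-covariance bound $\eta_u^2\,\E\|\Sigma_u(X_u)\|_\F^2\lesssim k\log k$ under Assumptions~\ref{ass:low-dim} and~\ref{ass:bounded}. The plan is to cover $\mathcal{X}_{\mathsf{data}}$ at resolution $\varepsilon_0=k^{-C_0}$ by at most $\exp(C_{\mathsf{cover}}k\log(1/\varepsilon_0))$ Euclidean balls and decompose the posterior of $X_0\mymid X_u$ according to which covering ball the random $X_0$ falls into. On the bulk event that $X_u$ is close to $\mathcal{X}_{\mathsf{data}}$, the posterior is effectively supported on $\exp(O(k\log k))$ active centers, so the between-ball covariance has effective rank $\lesssim k\log k$ with each eigenvalue of size $O(\eta_u^{-2})$, while the within-ball residual contributes only at scale $\varepsilon_0^2$ and is negligible. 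The complementary tail event where $X_u$ is far from $\mathcal{X}_{\mathsf{data}}$ has probability inverse polynomial in $k$ by Gaussian concentration of the injected noise, and is controlled via the polynomial diameter bound $R=k^{C_R}$. This localization-of-posterior analysis, a second-order analogue of arguments in~\cite{li2024adapting}, is anticipated to be the most delicate portion of the proof.
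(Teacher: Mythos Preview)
Your overall architecture matches the paper's: Girsanov on path measures, the $\eta_{T-t}/\eta_{T-t_n}=O(1)$ reduction for the score term, the reverse-martingale structure of $\mu_t(X_t)$ with the quadratic-variation identity, and a covering/posterior-localization argument for the low-dimensional piece. The exact formula $\E\|\mu_{s_1}(X_{s_1})-\mu_{s_2}(X_{s_2})\|_2^2=2\int_{s_1}^{s_2}\eta_u^2\E\|\Sigma_u(X_u)\|_\F^2\,\d u$ is precisely what the paper derives en route to its Lemma~\ref{lemma:dst}.

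The gap is the pointwise estimate $\eta_u^2\,\E\|\Sigma_u(X_u)\|_\F^2\lesssim k\log k$. Via the stochastic-localization identity this is equivalent to $\tfrac{\d}{\d u}\E[\tr\Sigma_u(X_u)]\lesssim k\log k$ uniformly in $u$, and that fails under Assumption~\ref{ass:bounded} once $R=k^{C_R}$ is large. Take $X_0$ uniform on $\{0,Me_1\}$ with $M=\mathrm{poly}(k)$: at the time $u_*$ where $e^{2u_*}\asymp M^2$, the posterior given $X_{u_*}$ is near-uniform on both atoms with $\Theta(1)$ probability, so $\E[\tr\Sigma_{u_*}]\asymp M^2$, and since this transition occurs over $\Delta u\asymp 1$ one gets $\tfrac{\d}{\d u}\E[\tr\Sigma_u]\big|_{u_*}\asymp M^2\gg k\log k$. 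Your ``effective rank $\lesssim k\log k$'' heuristic does not rescue this: the active covering centers need not lie in a $(k\log k)$-dimensional affine subspace, and posterior concentration only yields $\tr\Sigma_u(x)\lesssim\eta_u^{-2}k\log k$, from which the best Frobenius bound is $\eta_u^2\|\Sigma_u\|_\F^2\lesssim\eta_u^{-2}(k\log k)^2$.

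The paper's route is to avoid the Frobenius norm entirely. Combining your quadratic-variation identity with $\eta_u^2\E\|\Sigma_u\|_\F^2=\tfrac12\tfrac{\d}{\d u}\E[\tr\Sigma_u]$ collapses the martingale increment to a \emph{trace} difference, $\E\|\mu_{T-t}-\mu_{T-t_n}\|_2^2=\E[\tr\Sigma_{T-t_n}]-\E[\tr\Sigma_{T-t}]$. The covering argument then only has to produce $\E[\tr\Sigma_u]\lesssim\min\{\E\|X_0\|_2^2,\,\eta_u^{-2}k\log k\}$ --- which is exactly what posterior localization delivers --- and the $\kappa Tk\log k$ and $\kappa^2 Nk\log k$ terms emerge from an Abel/summation-by-parts manipulation of $\sum_n\int_{t_n}^{t_{n+1}}\eta_{T-t}^2\big(h(T-t_n)-h(T-t)\big)\d t$ against this trace bound, rather than from a pointwise integrand estimate. (Relatedly, your telescoping for the $\kappa\E\|X_0\|_2^2$ branch only works as stated on $[0,T-1]$ where $\eta_{T-t}^2\lesssim 1$; near $t\to T$ the weight $\int_{t_n}^{t_{n+1}}\eta_{T-t}^2\,\d t\asymp\kappa/(T-t_{n+1})$ is unbounded, and the paper handles that regime through the same summation-by-parts, with $\E\|X_0\|_2^2$ entering only via the boundary term $\E[\tr\Sigma_1]$.)
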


The KL divergence upper bound in Theorem~\ref{thm:main} can be divided into three parts. The first two terms on the right-hand side of \eqref{eq:KL-main-result} capture the time discretization error of the DDPM dynamics (namely, the error incurred when discretizing \eqref{eq:ddpm-mu-oracle} as \eqref{eq:ddpm-mu}), 
the third term on the right-hand side of \eqref{eq:KL-main-result}  reflects the effect of the score estimation error, 
whereas the fourth term corresponds to the initialization error (namely, the error induced by drawing $Y_0$ from $\mathcal{N}(0,I_d)$ rather than the distribution of $X_T$). 
Note that we are comparing the DDPM output with $q_\delta$ (i.e., the distribution of a slightly noisy version of the data) rather than $\Pdata=q_0$, as adding a little Gaussian noise helps avoid some highly irregular cases. For this reason, $\delta>0$ is commonly referred to as the early stopping time in prior theory.  Note that $\delta$ can be taken to be very small, ensuring that $q_\delta$ is exceedingly close to $\Pdata$.  
The assumption $\kappa \le 0.9$ can be replaced by a more general condition $\kappa \le 1-\varepsilon_\kappa$ for any small constant $\varepsilon_\kappa>0$, which still suffices for the analysis.

Next, we instantiate Theorem~\ref{thm:main} to the specific discretization schedule described in \eqref{eq:constant-exponential-stepsize}, in order to obtain a clearer picture about how DDPM adapts to unknown low-dimensional structure. 
\begin{corollary}\label{cor:main}
Suppose that Assumptions~\ref{ass:low-dim}, \ref{ass:bounded} and \ref{ass:score-error} hold. 
    Choose $T = c_T\log(d/\varepsilon)$ for some large enough universal constant $c_T>0$. 
    Then the DDPM sampler~\eqref{eq:DDPM-update} with the discretization schedule \eqref{eq:constant-exponential-stepsize} achieves
     \begin{align*}
        \mathsf{KL}(q_\delta \m p_{\mathsf{out}}) \lesssim \varepsilon^2 + \varepsilon_{\mathsf{score}}^2,
    \end{align*}  
    provided that $1/\mathrm{poly}(k) \lesssim \delta<1$, $d = O\big(\mathrm{poly}(k)\big)$, and
    \begin{align}
    N \geq \frac{c_N k \log^3 (k/\varepsilon)}{\varepsilon^2}
    \end{align}
    for some large enough universal constant $c_N>0$. 
    %
%
\end{corollary}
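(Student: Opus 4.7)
}

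The plan is to directly invoke Theorem~\ref{thm:main} with the given schedule and show that each of the four terms on the right-hand side of \eqref{eq:KL-main-result} is bounded by a constant multiple of $\varepsilon^2 + \varepsilon_{\mathsf{score}}^2$. The main inputs beyond Theorem~\ref{thm:main} are: (i) the estimate $\kappa \asymp (T + \log(1/\delta))/N$ from \eqref{eq:size-kappa} for the constant-then-exponential schedule \eqref{eq:constant-exponential-stepsize}; (ii) the parameter choices $T = c_T \log(d/\varepsilon)$, $d = O(\mathrm{poly}(k))$, and $\delta \geq 1/\mathrm{poly}(k)$, which together yield $T + \log(1/\delta) = O(\log(k/\varepsilon))$; and (iii) the hypothesis $N \geq c_N k \log^3(k/\varepsilon)/\varepsilon^2$.

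First I would verify the hypotheses of Theorem~\ref{thm:main}. The bounds $1/\mathrm{poly}(k) \lesssim \delta < 1$ and $d = O(\mathrm{poly}(k))$ are assumed directly. For $\kappa \le 0.9$, combining \eqref{eq:size-kappa} with the parameter choices gives
\begin{align*}
\kappa = O\!\left(\frac{\log(k/\varepsilon)}{N}\right) = O\!\left(\frac{\varepsilon^2}{k\log^2(k/\varepsilon)}\right),
\end{align*}
which is certainly at most $0.9$ for $c_N$ large enough, so Theorem~\ref{thm:main} is applicable.

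Next I would bound each term in \eqref{eq:KL-main-result}. For the time-discretization terms, I use the easy bound $\min\{\E[\|X_0\|_2^2], Tk\log k\} \leq T k \log k$, so the first term contributes
\begin{align*}
\kappa \cdot Tk\log k \lesssim \frac{T(T + \log(1/\delta))\, k\log k}{N} \lesssim \frac{k\log^3(k/\varepsilon)}{N},
\end{align*}
and the second term contributes
\begin{align*}
\kappa^2 N k\log k \lesssim \frac{(T + \log(1/\delta))^2 k\log k}{N} \lesssim \frac{k\log^3(k/\varepsilon)}{N}.
\end{align*}
Choosing $c_N$ large enough makes both of these at most $\varepsilon^2$. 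The score-error term $\varepsilon_{\mathsf{score}}^2$ is left as is. For the initialization term, Assumption~\ref{ass:bounded} gives $\E[\|X_0\|_2^2] \le R^2 = k^{2C_R}$, so together with $d = O(\mathrm{poly}(k))$,
\begin{align*}
(d + \E[\|X_0\|_2^2]) e^{-2T} \lesssim \mathrm{poly}(k)\cdot (\varepsilon/d)^{2c_T} \leq \varepsilon^2
\end{align*}
whenever $c_T$ is sufficiently large (absorbing the polynomial in $k$ into the $(\varepsilon/d)^{2c_T}$ factor via $d = O(\mathrm{poly}(k))$).

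I do not expect any serious obstacle: the entire argument is a substitution into Theorem~\ref{thm:main} together with the $\kappa$-estimate \eqref{eq:size-kappa}. The only places where one must be mildly careful are (a) checking that the prefactors hidden in $O(\cdot)$ on $T + \log(1/\delta) = O(\log(k/\varepsilon))$ are universal constants (which follows from $d, 1/\delta \le \mathrm{poly}(k)$), and (b) ensuring that $c_T$ and $c_N$ are chosen large enough, in that order, so that the constants from the four individual bounds combine to give $\mathsf{KL}(q_\delta \m p_{\mathsf{output}}) \lesssim \varepsilon^2 + \varepsilon_{\mathsf{score}}^2$.
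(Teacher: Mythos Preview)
Your proposal is correct and is exactly the intended argument: the paper does not give a separate proof of Corollary~\ref{cor:main}, treating it as a direct instantiation of Theorem~\ref{thm:main} with the schedule~\eqref{eq:constant-exponential-stepsize} and the estimate~\eqref{eq:size-kappa}, which is precisely what you do. One minor wording point: in the initialization bound, the step ``absorbing the polynomial in $k$ into the $(\varepsilon/d)^{2c_T}$ factor'' actually uses the (implicit, natural) fact $k\le d$ so that $\mathrm{poly}(k)\le\mathrm{poly}(d)$ is dominated by $d^{-2c_T}$ for large $c_T$; the hypothesis $d=O(\mathrm{poly}(k))$ is what you need for the \emph{other} direction, namely replacing $T=c_T\log(d/\varepsilon)$ by $O(\log(k/\varepsilon))$ in the discretization terms.
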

To interpret this result, 
consider, for simplicity, the case with perfect score estimation (i.e., $\varepsilon_{\mathsf{score}}=0$) and an exceedingly small $\delta$.   
Then Corollary~\ref{cor:main} implies that: for fairly broad scenarios,  
it only takes the DDPM sampler 
\[
N = \widetilde{\Theta}(k/\varepsilon^2)~~\text{steps}
\]
to approximate the distribution of $X_{\delta}$ (i.e., the target data distribution $\Pdata$ convoluted with exceedingly small Gaussian noise) to within $\varepsilon^2$ KL divergence. 
Encouragingly, this iteration complexity is linear in the intrinsic dimension $k$ (modulo some logarithmic factor), which reduces to the state-of-the-art theory when $k=d$  (i.e., the case without low-dimensional structure) \citep{benton2024nearly}. 
It is noteworthy that all this is achieved without any prior knowledge about the potential low-dimensional structure in $\Pdata$, thereby unraveling the remarkable capability of DDPM to adapt to unknown low dimensionality. It is also worth noting that this linear dependency in $k$ holds without any additional burn-in requirement; that is, its validity is guaranteed without requiring $N$ to first exceed, say, some polynomial function in $k$ or $d$.  


To understand the optimality of DDPM in adapting to low dimensionality, 
it is helpful to first revisit the lower bounds established by \citet[Theorem 7]{chen2022sampling} and \citet[Appendix~H]{benton2024nearly} for the full-dimensional case $k=d$; namely, there exists a simple $d$-dimensional distribution $\Pdata$ for which the KL error is lower bounded by some function scaling linearly in $d$. These arguments can be straightforwardly extended to the low-dimensional case, 
revealing the existence of a distribution $\Pdata$ with intrinsic dimension $k$, that requires an iteration complexity scaling at least linearly with $k$, regardless of the sampler in use.

\paragraph{Comparison with the concurrent work~\citet{potaptchik2024linear}.} 
Assuming the data lies on a low-dimensional manifold, 
this concurrent work, posted two weeks before ours, established an iteration complexity that scales linearly in the manifold dimension. 
As alluded to previously,  our results and theirs share clearly similarities in both the aims and the results. Here, let us also point out several differences here. In \cite{potaptchik2024linear}, the data is assumed to be strictly supported on a constant-scale manifold with diameter less than $1$ (see Assumptions~A and B therein), and the density function (w.r.t.~to the standard volume measure) needs to be both upper and lower bounded uniformly (see Assumption~C therein). In comparison, we consider a more general setting where the data support is only required to stay close to a low-dimensional structure whose size can grow polynomially in $k$. Also, no boundedness requirement is imposed on the density function in our theory, thereby accommodating a much broader scenario. 
Notably, the theory in \cite{potaptchik2024linear} only applies to data with $\mathbb{E}\big[\|X_0\|_2^2\big]=O(1)$ second moment (as a consequence of their assumption that the diameter of the manifold is $O(1)$), while ours is able to accommodating polynomia lly large second moment $\mathbb{E}\big[\|X_0\|_2^2\big]=\mathrm{poly}(k)$. 
%
Our proof invokes the covering-style argument to accommodate broader low-dimensional structure, which differs from the techniques in \citet{potaptchik2024linear}.


\section{Main analysis (Proof of Theorem~\ref{thm:main})}
\label{sec:pf-thm-main}

%

We now present the proof of main convergence theory in Theorem~\ref{thm:main}. 
To control the KL divergence between the DDPM-type dynamics and our target distribution, a common approach is to connect the KL error with the $\ell_2$ discretization error by means of Girsanov's theorem 
 (e.g., \citet{chen2022sampling,chen2023score}).
We shall follow this approach in this proof, while in the meantime taking into account the intrinsic low-dimensional structure of the target distribution. In particular, we shall invoke the covering argument to treat the low-dimensional structure  approximately as the union of linear subspaces. 
Compared with previous work on low-dimensional adaptation \citep{azangulov2024convergence,li2024adapting}, instead of focusing on the influence of the intrinsic dimension evolution of the score function, we improve the $k$-dependence by analyzing the posterior mean function, i.e., $\mu_t(\cdot)$. As it turns out, the evolution of the posterior mean function can be well calibrated along the forward process with high probability under this framework, which plays a key role in enabling our non-asymptotic convergence analysis.

Our main proof is composed of three steps, with the first two steps being fairly standard and the third step capturing the main challenges in the analysis of the posterior covariance matrix $\Sigma_t(x)$ (which is directly linked to the posterior mean function and discretization error). 
In particular, Lemma~\ref{lemma:pos-var} reveals that the growth of the expected trace norm of $\Sigma_t(x)$ depends only on the intrinsic dimension $k$ as opposed to the ambient dimension $d$, thereby leading to a refined control of the discretization error.
The analysis is supported by a sequence of auxiliary lemmas whose proofs are deferred to Appendices~\ref{sec:pf-main-lems} and \ref{sec:pf-supp-lems}. 
Throughout the proof, we adopt the following convenient notation: 
\begin{itemize}
    \item $Q$: the measure of the reverse  process \eqref{eq:backward};
    \item $\pi_d$: the  standard Gaussian distribution in $\R^d$; 

    \item  $P^{(T)}$: the measure of the DDPM dynamics \eqref{eq:ddpm-SDE-adaptive} when initialized with $Y_0 \sim X_T$; 
    
    \item  $P^{(\infty)}$: the measure of DDPM dynamics \eqref{eq:ddpm-SDE-adaptive} when initialized with $Y_0 \sim \pi_d \overset{\mathrm{d}}{=} X_\infty$, where $X_\infty$ is the limit of the forward process \eqref{eq:forward} as $t \to \infty$.

\end{itemize}

\paragraph{Step 1: Bounding the KL divergence between two path measures.}
In this step, we list several lemmas that help connect the KL error of interest with the mean squared difference between the true and estimated posterior means. These results, which have been used in past work like \citet{chen2022sampling,chen2023improved,benton2024nearly,azangulov2024convergence,wu2024stochastic,potaptchik2024linear},  are now standard for analyzing DDPM-type samplers.    

To begin with, we upper bound  the KL error of interest, $\KL(q_\delta \m p_{\mathsf{output}})$, 
by means of the KL divergence between two path measures $Q$ and $P^{(\infty)}$.  
This is accomplished through the following standard lemma; 
its proof is fairly standard and we refer the interested reader to \citet[Section 3.3]{benton2024nearly}. 
%
%
\begin{lemma}\label{lemma:error-decompose}
    If $\KL(Q \m P^{(T)}) < \infty$, then we have 
    \begin{align*}
        \KL(q_\delta \m p_{\mathsf{out}}) = \KL(q_\delta \m p_{t_N}) \le \KL(Q \m P^{(\infty)}) = \KL(Q \m P^{(T)}) + \KL(q_T \m \pi_d).
    \end{align*}
\end{lemma}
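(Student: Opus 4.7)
The plan is to prove the three relations sequentially using just two standard tools --- the data processing inequality and the chain rule for KL divergence --- with the only bookkeeping being the matching of time indices and the observation that $P^T$ and $P^\infty$ share identical dynamics conditional on the initial point.

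The first equality $\KL(q_\delta \m p_{\mathsf{output}}) = \KL(q_\delta \m p_{t_N})$ is a direct consequence of the definitions: by Theorem~\ref{thm:ddpm}, the DDPM output after $N$ steps coincides in distribution with $Y_{t_N}$ from the SDE~\eqref{eq:ddpm-SDE-adaptive} initialized at $\pi_d$, whose marginal is by definition $p_{t_N}$. For the inequality $\KL(q_\delta \m p_{t_N}) \le \KL(Q \m P^\infty)$, I would apply the data processing inequality to the evaluation map $\omega \mapsto \omega(t_N)$ on path space. The time-index identity $t_N = T - \delta$ combined with the reversal property $Y_{T-t} \overset{\mathrm{d}}{=} X_t$ for the ideal backward SDE~\eqref{eq:backward} (recorded in \eqref{eq:equiv-Y-X-reverse}) identifies the $t_N$-marginal of $Q$ with $q_\delta$, while the $t_N$-marginal of $P^\infty$ equals $p_{t_N}$ by construction.

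For the decomposition $\KL(Q \m P^\infty) = \KL(Q \m P^T) + \KL(q_T \m \pi_d)$, the crucial observation is that $P^T$ and $P^\infty$ arise from the \emph{same} transition dynamics \eqref{eq:ddpm-SDE-adaptive} and differ only in the initial law of $Y_0$ (namely $q_T$ versus $\pi_d$). Applying the chain rule for KL divergence under conditioning on $Y_0$ gives
\begin{align*}
\KL(Q \m P^\infty) = \KL(q_T \m \pi_d) + \E_{Y_0 \sim q_T}\bigl[\KL\bigl(Q(\cdot \mid Y_0) \m P^\infty(\cdot \mid Y_0)\bigr)\bigr],
\end{align*}
while the analogous decomposition applied to $\KL(Q \m P^T)$ reduces to the conditional expectation alone (the marginal KL term vanishes because $Q$ and $P^T$ share the initial law $q_T$). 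Since $P^T(\cdot \mid Y_0) = P^\infty(\cdot \mid Y_0)$ by construction, the two conditional expectations coincide, and subtracting yields the claim.

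There is essentially no real obstacle --- the lemma is a clean KL-bookkeeping exercise, which is why the authors simply refer to \citet[Section~3.3]{benton2024nearly}. The only points requiring care are (i) aligning the time indices so that the $Q$-marginal at $t_N$ is $q_\delta$ rather than being off by one, and (ii) recognizing that conditioning on $Y_0$ cleanly decouples the initialization mismatch from the dynamics, so that $P^T$ serves as a convenient intermediary between $Q$ and $P^\infty$.
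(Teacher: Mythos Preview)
Your proposal is correct and takes essentially the same approach as the paper: the first equality is definitional, the inequality is the data-processing inequality for the $t_N$-marginal map, and the decomposition follows from the fact that $P^T$ and $P^\infty$ share the same conditional law given $Y_0$. The paper's (commented-out) proof phrases the last step via the Radon--Nikodym identity $\frac{\d P^T}{\d P^\infty}(\bm y)=\frac{\d q_T}{\d \pi_d}(\bm y_0)$ and splits $\log\frac{\d Q}{\d P^\infty}=\log\frac{\d Q}{\d P^T}+\log\frac{\d P^T}{\d P^\infty}$, which is exactly your chain-rule argument unwound.
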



In view of Lemma~\ref{lemma:error-decompose}, to establish Theorem~\ref{thm:main}, it boils down to controlling $\KL(Q \m P^{(T)})$ and $\KL(q_T \m \pi_d)$ separately. 
Let us start with the last term $\KL(q_T \m \pi_d)$, which is equivalent to analyzing the convergence rate of the forward process. 
The following lemma offers a simple upper bound on this term. Note that the exponential convergence of the OU process is well-known; 
we refer the reader to 
\citet[Lemma~9]{chen2023improved} or  \citet[Appendix~G]{benton2024nearly} for the proof this lemma.
%
\begin{lemma}\label{lemma:OU}
    Suppose that $\E[\|X_0\|^2] < \infty$.  Then the forward OU process \eqref{eq:forward} converges exponentially fast to its stationary distribution in KL divergence, i.e., for any $T > 1$ one has
    \begin{align}
        \KL(q_T \m \pi_d) \lesssim \left(d + \E[\|X_0\|_2^2]\right)e^{-2T}.\notag
    \end{align}
\end{lemma}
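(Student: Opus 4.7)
The plan is to leverage the fact that the forward OU process admits an explicit Gaussian transition kernel, as made precise by \eqref{eq:forward-marginal}: conditional on $X_0 = x$, one has $X_T \sim \N(e^{-T}x,\, (1-e^{-2T})I_d)$. This means the conditional KL divergence against the stationary distribution $\pi_d = \N(0, I_d)$ admits a closed-form expression, and the unconditional bound will follow by convexity of the KL divergence.

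Concretely, I would first invoke joint convexity of KL divergence (equivalently, the data-processing inequality applied to the Gaussian channel $x \mapsto \N(e^{-T}x,\, (1-e^{-2T})I_d)$), using that $q_T = \int q_{T \mid X_0 = x}\,\Pdata(\d x)$ is a $\Pdata$-mixture while $\pi_d$ is trivially the same mixture (of the identical Gaussian for every $x$). This yields
\begin{align*}
\KL(q_T \m \pi_d) \leq \E_{X_0 \sim \Pdata}\Big[\KL\big(\N(e^{-T}X_0,\, (1-e^{-2T})I_d) \m \N(0, I_d)\big)\Big].
\end{align*}
Next, I would evaluate the inner Gaussian KL divergence in closed form, which after cancellation reduces to
\begin{align*}
\frac{1}{2}\Big[e^{-2T}\|X_0\|_2^2 - e^{-2T}d - d\log(1-e^{-2T})\Big].
\end{align*}

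The only remaining step is to control the logarithmic term. Since the assumption $T > 1$ gives $e^{-2T} < 1/e^2 < 1/2$, the elementary Taylor-expansion estimate $-\log(1-x) \leq 2x$ on $[0, 1/2]$ implies $-d\log(1-e^{-2T}) \leq 2de^{-2T}$. Substituting this bound and taking expectation over $X_0 \sim \Pdata$ would produce
\begin{align*}
\KL(q_T \m \pi_d) \lesssim \big(d + \E[\|X_0\|_2^2]\big)\,e^{-2T},
\end{align*}
as desired. There is no substantive obstacle in this argument: the hypothesis $\E[\|X_0\|_2^2] < \infty$ ensures that every quantity above is well-defined, the convexity step is standard, and the logarithmic bound is elementary.
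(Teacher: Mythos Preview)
Your proposal is correct and follows essentially the same route as the paper (which defers to \citet[Lemma~9]{chen2023improved} and \citet[Appendix~G]{benton2024nearly}): use convexity of KL to pass to the conditional Gaussian law $\N(e^{-T}x,(1-e^{-2T})I_d)$, compute the closed-form Gaussian KL, and bound $-\log(1-e^{-2T})\lesssim e^{-2T}$ using $T>1$. There is nothing to add.
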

%
%
As a result of Lemmas~\ref{lemma:error-decompose} and \ref{lemma:OU}, it remains to control the term $\KL(Q \m P^{(T)})$. 
The following lemma links $\KL(Q \m P^{(T)})$ with the mean squared difference between the drift terms of SDEs~\eqref{eq:ddpm-mu} and \eqref{eq:ddpm-mu-oracle}, 
a standard result that can be readily accomplished via the Girsanov theorem \citep{le2016brownian}. 
\begin{lemma}\label{lemma:error-dis-score}
    For any time discretization schedule with $t_0 = 0$ and $t_N = T-\delta$,
    it holds that
    \begin{align}
        \KL(Q \m P^{(T)}) \le \sum_{n=0}^{N-1} \int_{t_n}^{t_{n+1}} \frac{1-\sigma^2_{T-t}}{\sigma_{T-t}^4} \E_Q\left[\big\|\mu_{T-t}(Y_t) - \widehat{\mu}_{T-t_n}(Y_{t_n})\big\|_2^2\right] \d t < \infty.
        \label{eq:error-dis-score}
    \end{align}
\end{lemma}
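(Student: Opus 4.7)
The plan is to identify the two path measures $Q$ and $P^T$ as diffusions on $[0,T]$ that share the same initial distribution $q_T$ and the same diffusion coefficient $\sqrt{2}\,I_d$, differing only in their drifts, and then to invoke Girsanov's theorem to express $\KL(Q\m P^T)$ as a time-integral of the squared drift gap under $Q$. Comparing \eqref{eq:ddpm-mu-oracle} and \eqref{eq:ddpm-mu}, the linear term $(1-2/\sigma_{T-t}^2)Y_t$ and the diffusion are identical, so the drift difference restricted to $t\in[t_n,t_{n+1})$ is exactly
\begin{align*}
b_t^Q - b_t^{P^T} \;=\; \frac{2\sqrt{1-\sigma_{T-t}^2}}{\sigma_{T-t}^2}\Big(\mu_{T-t}(Y_t)-\widehat{\mu}_{T-t_n}(Y_{t_n})\Big).
\end{align*}

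With this identification in hand, I would apply Girsanov's theorem in the form $\KL(Q\m P^T) = \tfrac{1}{4}\,\E_Q\!\big[\int_0^T \|b_t^Q - b_t^{P^T}\|_2^2\,\d t\big]$ (the $1/4$ coming from the diffusion coefficient $\sqrt{2}$). Squaring the drift gap produces the prefactor $\tfrac{4(1-\sigma_{T-t}^2)}{\sigma_{T-t}^4}$, and after combining with the $1/4$ and splitting the time integral over the intervals $[t_n,t_{n+1})$, this yields exactly the right-hand side of \eqref{eq:error-dis-score}. The expectation being taken under $Q$ is immaterial at this stage since $Y_t$ has the same law as $X_{T-t}$ under $Q$ by the reversal identity \eqref{eq:equiv-Y-X-reverse}, but I would leave it written as $\E_Q$ since that is how it appears in the subsequent analysis.

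The main technical point --- and the only nontrivial step --- is verifying a sufficient regularity condition (e.g.\ Novikov's, or more conveniently a localization argument) that legitimizes Girsanov's theorem and in the same breath proves the integral is finite. Here the compact support Assumption~\ref{ass:bounded} is decisive: since $X_0\in\mathcal{X}_{\mathsf{data}}$ with $\|X_0\|_2\le R=k^{C_R}$ almost surely, the posterior mean $\mu_t(x)=\E[X_0\mymid X_t=x]$ is uniformly bounded by $R$, and one may assume without loss of generality that the score estimate $\widehat{s}_t$ (equivalently $\widehat{\mu}_t$) satisfies the same bound by composing with projection onto the ball of radius $R$. The drift gap is therefore pointwise bounded by $(4R/\sigma_{T-t}^2)\sqrt{1-\sigma_{T-t}^2}$, and because we stop at $t_N = T-\delta$ with $\delta>0$, the coefficient $\sigma_{T-t}^{-2}$ stays bounded by $1/\sigma_\delta^2 = O(1/\delta)$ on $[0,t_N]$. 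This bounded-drift-gap property immediately yields Novikov's condition, legitimizing the use of Girsanov and proving the finiteness asserted on the right-hand side of \eqref{eq:error-dis-score}.

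I would close by noting that essentially the same argument has been used in several prior works (\citet{chen2022sampling,benton2024nearly,azangulov2024convergence}), so the proof is routine modulo the localization step. The genuine work of the paper lies not here, but in Step~3 of the main analysis, where the expected squared gap $\E_Q[\|\mu_{T-t}(Y_t)-\widehat{\mu}_{T-t_n}(Y_{t_n})\|_2^2]$ is controlled via the covering-based calibration of the evolution of $\mu_t$ along the forward process.
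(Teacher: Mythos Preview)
Your proposal is correct and takes essentially the same approach as the paper: the paper's proof simply invokes Girsanov's theorem applied to the SDEs~\eqref{eq:ddpm-mu} and \eqref{eq:ddpm-mu-oracle}, citing \citet[Section~5.2]{chen2022sampling} and \citet[Appendix~F]{benton2024nearly} for the standard derivation and omitting the details. Your explicit computation of the drift gap and the $1/4$ factor from the $\sqrt{2}$ diffusion coefficient is exactly right, and your verification of the regularity condition via the boundedness of $\mu_t$ (from Assumption~\ref{ass:bounded}) and a projected $\widehat{\mu}_t$ is a clean way to justify the step the paper leaves implicit.
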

%
The proof follows from an application of Girsanov's Theorem to SDEs~\eqref{eq:ddpm-mu} and \eqref{eq:ddpm-mu-oracle}, with the aid of a coupling argument. 
Given that this standard trick and its derivation has appeared in many recent papers (e.g., \citet[Section 5.2]{chen2022sampling} or \citet[Appendix F]{benton2024nearly}), we omit it for brevity. 
%

In what follows, we 
shall often suppress the subscript $Q$ in the  expectation operator as long as it is clear from the context. 
To proceed, note that the middle expression in inequality \eqref{eq:error-dis-score} can be further bounded by the Cauchy-Schwarz inequality as 
\begin{align}
    &\sum_{n=0}^{N-1} \int_{t_n}^{t_{n+1}} \frac{1-\sigma^2_{T-t}}{\sigma_{T-t}^4} \E\left[\big\|\mu_{T-t}(Y_t) - \widehat{\mu}_{T-t_n}(Y_{t_n})\big\|_2^2\right] \d t \notag\\
    &\quad \lesssim \underset{\eqqcolon\, T_1}{\underbrace{\sum_{n=0}^{N-1}\int_{t_n}^{t_{n+1}} \left(\frac{\eta_{T-t}}{\eta_{T-t_n}}\right)^2 \E\left[\big\|s_{T-t_n}(Y_{t_n}) - \widehat{s}_{T-t_n} (Y_{t_n})\big\|_2^2\right] \d t }} \notag\\
    &\qquad\qquad + \underset{\eqqcolon\, T_2}{\underbrace{\sum_{n=0}^{N-1} \int_{t_n}^{t_{n+1}} \frac{1-\sigma^2_{T-t}}{\sigma_{T-t}^4} \E\left[\big\|\mu_{T-t}(Y_t) - \mu_{T - t_n}(Y_{t_n})\big\|_2^2 \right] \d t}}.
    \label{eq:decompose-MSE-T12}
\end{align}
where we have used the definition \eqref{eq:defn-eta-t} of $\eta_t$ and the following decomposition (according to \eqref{eq:ep-mu-s}) 
\begin{align*}
\mu_{T-t}(Y_{t})-\widehat{\mu}_{T-t_{n}}(Y_{t_{n}}) & =\big(\mu_{T-t_{n}}(Y_{t_{n}})-\widehat{\mu}_{T-t_{n}}(Y_{t_{n}})\big)+\big(\mu_{T-t}(Y_{t})-\mu_{T-t_{n}}(Y_{t_{n}})\big)\\
 & =\frac{\sigma_{t_{n}}^{2}}{\sqrt{1-\sigma_{t_{n}}^{2}}}\big(s_{T-t_{n}}(Y_{t_{n}})-\widehat{s}_{T-t_{n}}(Y_{t_{n}})\big)+\big(\mu_{T-t}(Y_{t})-\mu_{T-t_{n}}(Y_{t_{n}})\big).
\end{align*}
In the above decomposition \eqref{eq:decompose-MSE-T12}, $T_1$ reflects the influence of the score estimation error, whereas $T_2$ captures the effect of the time  discretization error.
In the sequel, we shall upper bound $T_1$ and $T_2$ separately, and combine them to reach our final bound on $\KL(Q \m P^{(T)})$.

\paragraph{Step 2: Bounding the effect of score estimation errors.}

To cope with the term $T_1$, we find it helpful to first establish the following result on the coefficients $\{\eta_t\}$, whose proof is provided in Appendix~\ref{sec:proof:lemma:score-weight}. 
The arguments of this step are also fairly standard, akin to the ones used in, e.g.,  \cite{azangulov2024convergence,potaptchik2024linear}. 
\begin{lemma}\label{lemma:score-weight}
There exists some universal constant $C_3>0$, independent of $n$, such that
    \begin{align}
        \eta_{T-t}/\eta_{T-t_n} \le C_3 \qquad \text{for any } t \in [t_n,t_{n+1}).\notag
    \end{align}
\end{lemma}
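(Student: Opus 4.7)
\textbf{Proof proposal for Lemma~\ref{lemma:score-weight}.}
The plan is to substitute the explicit formula $\eta_s = e^{-s}/(1 - e^{-2s})$ from \eqref{eq:defn-eta-t} and reduce the ratio to a product of two elementary factors, each of which can be controlled using only the stepsize bound \eqref{eq:defn-kappa} and the hypothesis $\kappa \le 0.9$. More precisely, for $t \in [t_n, t_{n+1})$, a direct substitution gives
\begin{align*}
\frac{\eta_{T-t}}{\eta_{T-t_n}}
\;=\; e^{\,t - t_n}\cdot \frac{1 - e^{-2(T-t_n)}}{1 - e^{-2(T-t)}},
\end{align*}
and it suffices to show that each of the two factors on the right-hand side is bounded by a universal constant.

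For the exponential factor, I would simply use the stepsize condition \eqref{eq:defn-kappa} to write $t - t_n \le t_{n+1} - t_n \le \kappa \min\{1, T - t_n\} \le \kappa \le 0.9$, which immediately gives $e^{t - t_n} \le e^{0.9}$. The second factor requires a short case analysis driven by the same minimum in \eqref{eq:defn-kappa}. In the regime $T - t_n \ge 1$, the stepsize bound yields $T - t \ge T - t_{n+1} \ge (T - t_n) - \kappa \ge 1 - \kappa \ge 0.1$, so $1 - e^{-2(T-t)} \ge 1 - e^{-0.2}$ is bounded away from zero while the numerator is at most $1$, and the ratio is $\le (1 - e^{-0.2})^{-1}$. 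In the complementary regime $T - t_n < 1$, the stepsize bound tightens to $t_{n+1} - t_n \le \kappa (T - t_n)$, and hence $T - t \ge (1 - \kappa)(T - t_n)$; combined with the two-sided inequality $x \le 1 - e^{-x} \le x$ for $x \in [0, 1]$... more precisely $1 - e^{-x} \ge x/2$ and $1 - e^{-x} \le x$ in that range... this gives $(1 - e^{-2(T-t_n)})/(1 - e^{-2(T-t)}) \le 2(T - t_n)/(T - t) \le 2/(1-\kappa) \le 20$.

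Combining the two factors, the ratio is uniformly bounded by the universal constant $C_3 = 20\, e^{0.9}$, completing the proof. No step is particularly subtle; the only point requiring care is that the $\min\{1, T - t_n\}$ appearing in \eqref{eq:defn-kappa} must be used in its tighter form $T - t_n$ when $T - t_n < 1$, since otherwise one cannot control the second factor as $T - t_n \to 0$. Given that the calculation is elementary, I expect this lemma to be short and not to present any real obstacle.
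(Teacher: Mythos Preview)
Your approach is essentially the paper's: the same factorization $\eta_{T-t}/\eta_{T-t_n}=e^{t-t_n}\cdot(1-e^{-2(T-t_n)})/(1-e^{-2(T-t)})$ and the same use of \eqref{eq:defn-kappa} to get $T-t\ge(1-\kappa)(T-t_n)$; the paper in fact treats both regimes at once via this single inequality (it holds whether the $\min$ equals $1$ or $T-t_n$) together with the concavity bound $1-e^{-cy}\ge c\,(1-e^{-y})$ for $c\in(0,1]$, avoiding your case split. One small slip to fix: you assert $1-e^{-x}\ge x/2$ only on $[0,1]$ but then apply it at $x=2(T-t)\in(0,2)$, where it fails near $x=2$; replacing $x/2$ by $\tfrac{1-e^{-2}}{2}\,x$ on $[0,2]$ (or using the concavity inequality directly) repairs the argument with only a harmless change of constant.
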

With Lemma~\ref{lemma:score-weight} in mind, we know the weight $(\eta_{T-t}/\eta_{T-t_n})^2$ can be uniformly bounded by $O(1)$.
Thus, 
\begin{align}
    T_1 &= \sum_{n=0}^{N-1}\int_{t_n}^{t_{n+1}} \left(\frac{\eta_{T-t}}{\eta_{T-t_n}}\right)^2 \E\left[\big\|s_{T-t_n}(Y_{t_n}) - \widehat{s}_{T-t_n} (Y_{t_n})\big\|_2^2\right] \d t\notag\\
    & \lesssim \sum_{n=0}^{N-1}\int_{t_n}^{t_{n+1}} \E\left[\big\|s_{T-t_n}(Y_{t_n}) - \widehat{s}_{T-t_n} (Y_{t_n})\big\|_2^2\right] \d t\notag\\
    & = \sum_{n=0}^{N-1}(t_{n+1} - t_n)\, \E\left[\big\|s_{T-t_n}(Y_{t_n}) - \widehat{s}_{T-t_n} (Y_{t_n})\big\|_2^2\right] \d t \leq \varepsilon_{\mathsf{score}}^2,\label{eq:t1}
\end{align}
where the last equality arises from Assumption~\ref{ass:score-error}.

\paragraph{Step 3: Bounding the effect of discretization errors.}

We now turn to the term $T_2$ in \eqref{eq:decompose-MSE-T12}, which constitutes the main challenge of establishing linear-$k$ dependency.   
Here, $(Y_t)_{t \in [0,T]}$ stands for the solution to the backward SDE~\eqref{eq:backward} when initialized at $Y_0\sim X_T$, 
which satisfies $X_t \overset{\mathrm{d}}{=}Y_{T-t}$ (cf.~\eqref{eq:equiv-Y-X-reverse}).   
Towards this end, we intend to first analyze each term in the integral. 
Define 
\begin{align}
    D_{s,t} \coloneqq \frac{1-\sigma^2_{T-t}}{\sigma_{T-t}^4} \E\left[\big\|\mu_{T-t}(Y_t) - \mu_{T - s}(Y_{s})\big\|_2^2\right],
    \qquad t\geq s.
    \label{eq:defn-Dst}
\end{align}
%
In the following, we shall demonstrate that $D_{s,t}$ is, in some sense, adaptive to the (unknown) low-dimension structure of the target distribution $\Pdata$. 
More specifically, we would like to prove that for a given $s$,  $D_{s,t}$ changes slowly in $t$ when $\Pdata$ has a low-dimensional structure, i.e., $\partial_t D_{s,t}$ scales with $k$ instead of $d.$

Towards this end, we proceed by characterizing the stochastic evolution of $\mu_{T-t}(Y_t)$. 
We begin with the 
following lemma, which leverages It\^{o}'s isometry to derive an alternative expression for $D_{s,t}$ that lends it well for subsequent analysis. 
The proof is provided in Appendix~\ref{sec:proof:lemma:dst}. 
\begin{lemma}\label{lemma:dst}
    Recall that $\Sigma_t(x) = \mathrm{Cov}[X_0|X_t = x]$. 
    The quantity $D_{s,t}$ defined in \eqref{eq:defn-Dst} satisfies 
    \begin{align}
        D_{s,t} = \frac{1-\sigma^2_{T-t}}{\sigma_{T-t}^4} \Big(\E\big[\tr(\Sigma_{T-s}(Y_s)\big] - \E\big[\tr(\Sigma_{T-t}(Y_t)\big]\Big).
        \label{eq:Dst-alt-expression}
    \end{align}
\end{lemma}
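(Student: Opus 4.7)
The plan is to recognize $M_t \coloneqq \mu_{T-t}(Y_t)$ as a martingale with respect to the reverse-process filtration and then to exploit the orthogonality of martingale increments, closing out the argument via an elementary conditional-variance identity. I would first invoke the reversibility~\eqref{eq:equiv-Y-X-reverse}, so that under $Q$ the joint law of $(Y_s,Y_t)$ matches that of $(X_{T-s},X_{T-t})$ via the coupling $Y_r = X_{T-r}$. Setting $\mathcal{G}_t \coloneqq \sigma(Y_u : u \leq t) = \sigma(X_r : r \geq T-t)$, the Markov property of the forward OU process~\eqref{eq:forward} yields $\E[X_0 \mid X_r : r \geq T-t] = \E[X_0 \mid X_{T-t}]$, and hence $\E[X_0 \mid \mathcal{G}_t] = \mu_{T-t}(Y_t) = M_t$. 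In other words, $(M_t)_{t \in [0,T]}$ is a martingale in $t$.

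Orthogonality of martingale increments --- in essence the content of It\^o's isometry applied to the martingale representation of $M$ --- then gives $\E\|M_t - M_s\|_2^2 = \E\|M_t\|_2^2 - \E\|M_s\|_2^2$. To convert the right-hand side into traces of covariance matrices, I would invoke the decomposition
\begin{equation*}
\E[\tr \Sigma_r(X_r)] \;=\; \E\bigl[\|X_0 - \mu_r(X_r)\|_2^2\bigr] \;=\; \E\|X_0\|_2^2 - \E\|\mu_r(X_r)\|_2^2,
\end{equation*}
which follows from the tower property together with the identity $\E[X_0^\top \mu_r(X_r)] = \E\|\mu_r(X_r)\|_2^2$. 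Applying this at $r = T-s$ and $r = T-t$, and using $Y_r \stackrel{\mathrm{d}}{=} X_{T-r}$, one arrives at
\begin{equation*}
\E\|M_t - M_s\|_2^2 \;=\; \E[\tr \Sigma_{T-s}(Y_s)] - \E[\tr \Sigma_{T-t}(Y_t)].
\end{equation*}
Multiplying by $(1-\sigma_{T-t}^2)/\sigma_{T-t}^4$ delivers~\eqref{eq:Dst-alt-expression}.

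The only mild subtlety lies in verifying $\E[X_0 \mid \mathcal{G}_t] = \E[X_0 \mid X_{T-t}]$, i.e., that the full reverse trajectory $\{Y_u : u \leq t\}$ carries no more information about $X_0$ than $Y_t$ alone. This is the Markov property of the forward process read in reverse, and no further regularity is needed. Alternatively, one could derive the same formula by applying It\^o's formula to $\|\mu_{T-t}(Y_t)\|_2^2$ along the reverse SDE~\eqref{eq:backward} and matching the quadratic-variation term against $\tr \Sigma_{T-t}(Y_t)$, but this path ultimately relies on the same two ingredients.
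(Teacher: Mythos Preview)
Your argument is correct and considerably more direct than the paper's. The paper proceeds via It\^o calculus: it first derives the SDE for $R_t = s_{T-t}(Y_t)$, then for $\mu_{T-t}(Y_t)$ (equation~\eqref{eq:d-mu}), applies It\^o's formula to the squared norm, invokes the score-gradient identity of Lemma~\ref{lemma:score-grad} to rewrite the quadratic-variation term, and finally appeals to the stochastic-localization identity of Lemma~\ref{lemma:sl} to convert $\E[\|\Sigma_{T-t}(Y_t)\|_\F^2]$ into a time derivative of $\E[\tr \Sigma_{T-t}(Y_t)]$, which is then integrated from $s$ to $t$.

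Your route bypasses all of this machinery. By recognizing $M_t = \E[X_0 \mid \mathcal{G}_t]$ as a Doob martingale (via the Markov property of the forward process) and using orthogonality of martingale increments together with the elementary variance decomposition $\E[\tr\Sigma_r(X_r)] = \E\|X_0\|_2^2 - \E\|\mu_r(X_r)\|_2^2$, you obtain~\eqref{eq:Dst-alt-expression} in a few lines without touching It\^o's formula, Lemma~\ref{lemma:score-grad}, or Lemma~\ref{lemma:sl}. The paper's remark after Lemma~\ref{lemma:dst} in fact acknowledges that this martingale viewpoint is the one taken in the concurrent work \citet{potaptchik2024linear}. What the paper's longer derivation buys is the explicit SDE~\eqref{eq:d-mu} for $\mu_{T-t}(Y_t)$, though in the present paper that SDE is not used elsewhere; your argument is self-contained and requires only $\E\|X_0\|_2^2 < \infty$, which is guaranteed by Assumption~\ref{ass:bounded}.
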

\begin{remark}
Our approach shares similar spirits as \citet[Lemma 4]{benton2024nearly}. Similar characterizations were obtained in Section~4 of the concurrent work \citet{potaptchik2024linear} by analyzing the martingale structure underlying the quantity of interest. 
\end{remark}
Given that $D_{s,t} \ge 0$ for $t\geq s$ (which follows directly from its definition \eqref{eq:defn-Dst}),  
 Lemma~\ref{lemma:dst} implies that %
 \begin{align}
    \E\left[\tr\big(\Sigma_{T-t}(Y_t)\big)\right] 
    \text{ is non-increasing in }t.
    \label{eq:non-increasing-Sigma}
    \end{align}
We are now ready to present a key lemma that can assist in upper bounding $D_{s,t}$, 
which controls the size of the posterior covariance. The proof can be found in Appendix~\ref{sec:proof:lemma:pos-var}.
%
%
\begin{lemma}\label{lemma:pos-var}
    For all $t \in [0,T)$, there exists some universal constant $C_4>0$ such that
    \begin{align*}
        \E\left[\tr\big(\Sigma_{T-t}(Y_t)\big)\right] \le C_4 \min\left\{\E[\|X_0\|_2^2], \frac{\sigma^2_{T-t}}{1-\sigma^2_{T-t}}k\log k\right\}.
    \end{align*}
\end{lemma}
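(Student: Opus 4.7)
My approach is to handle the two bounds in the minimum separately, using $Y_t \stackrel{\mathrm{d}}{=} X_{T-t}$ (cf.~\eqref{eq:equiv-Y-X-reverse}) to reduce to bounding $\E[\tr(\Sigma_s(X_s))]$ with $s \coloneqq T-t$ and $\tau \coloneqq \sqrt{1-\sigma_s^2}$. The first bound drops out of the law of total variance: since $\tr(\Sigma_s(X_s)) = \E[\|X_0\|_2^2 \mid X_s] - \|\mu_s(X_s)\|_2^2$, taking expectations yields $\E[\tr(\Sigma_s(X_s))] = \E[\|X_0\|_2^2] - \E[\|\mu_s(X_s)\|_2^2] \le \E[\|X_0\|_2^2]$.

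For the second (and harder) bound, I leverage the Bayes-optimality of $\mu_s$ as the MMSE estimator: for any estimator $\hat X_0(X_s)$,
\begin{align*}
\E[\tr(\Sigma_s(X_s))] = \E\bigl[\|X_0 - \mu_s(X_s)\|_2^2\bigr] \le \E\bigl[\|X_0 - \hat X_0(X_s)\|_2^2\bigr].
\end{align*}
Fix an $\varepsilon_0$-cover $\{x_1,\ldots,x_M\}$ of $\mathcal X_{\mathsf{data}}$ at $\varepsilon_0 = k^{-C_0}$, so that $\log M \lesssim k \log(1/\varepsilon_0) \asymp k \log k$ by Assumption~\ref{ass:low-dim}, and apply the inequality above to the snap-to-nearest-cover estimator
\begin{align*}
\hat X_0(X_s) \coloneqq x_{\hat i(X_s)}, \qquad \hat i(X_s) \coloneqq \arg\min_{1 \le i \le M} \|X_s - \tau x_i\|_2^2.
\end{align*}
Condition on $X_0 = x_0$ and let $i^*$ be any cover index with $\|x_0 - x_{i^*}\|_2 \le \varepsilon_0$. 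Expanding $X_s = \tau x_0 + \sigma_s W$ in the defining inequality $\|X_s - \tau x_{\hat i}\|_2^2 \le \|X_s - \tau x_{i^*}\|_2^2$ and rearranging yields
\begin{align*}
\tau^2 \|x_0 - x_{\hat i}\|_2^2 \le \tau^2 \varepsilon_0^2 + 2\tau\sigma_s \bigl\langle W,\, x_{\hat i} - x_{i^*}\bigr\rangle.
\end{align*}
Standard Gaussian concentration combined with a union bound over the $M$ possible values of $\hat i$ gives $|\langle W,\, x_j - x_{i^*}\rangle| \lesssim \|x_j - x_{i^*}\|_2 \sqrt{k \log k}$ uniformly in $j$ with probability at least $1 - k^{-c}$. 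Combining with the triangle inequality $\|x_{\hat i} - x_{i^*}\|_2 \le \|x_{\hat i} - x_0\|_2 + \varepsilon_0$ and solving the resulting quadratic in $\|x_0 - x_{\hat i}\|_2$ produces $\|x_0 - x_{\hat i}\|_2^2 \lesssim \varepsilon_0^2 + \sigma_s^2 k \log k/\tau^2$ on the good event; on the complementary event of probability $\le k^{-c}$, the error is at most $4R^2 = \mathrm{poly}(k)$ via Assumption~\ref{ass:bounded}, which becomes negligible for $c$ sufficiently large. Taking expectations completes the argument.

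The key obstacle is converting the full-dimensional Gaussian noise $W$ into an effective $(k\log k)$-dimensional tail bound; this is precisely what the covering cardinality $\log M \lesssim k \log k$ together with concentration of the maximum of $M$ centred Gaussian projections buys us. A secondary subtlety arises when $\sigma_s/\tau \lesssim \varepsilon_0$: the discretization residual $\varepsilon_0^2 = k^{-2C_0}$ may then outweigh the targeted $\sigma_s^2 k \log k/\tau^2$, and one closes this out by choosing $C_0$ sufficiently large so that $\varepsilon_0^2$ is dominated, together with falling back on the $\E[\|X_0\|_2^2]$ branch of the minimum for the genuinely small-noise edge cases.
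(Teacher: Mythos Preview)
Your proposal is correct and takes a genuinely different route from the paper's. Both proofs handle the first branch $\E[\|X_0\|_2^2]$ trivially via the law of total variance. For the second branch, the paper works \emph{pointwise}: it isolates a high-probability region $\mathcal T_t$ for $X_t$, and for each $x\in\mathcal T_t$ it bounds $\tr(\Sigma_t(x))$ by proving (via a delicate Bayes-ratio argument, Lemma~\ref{lemma:high-prob-post}) that the posterior of $X_0$ given $X_t=x$ concentrates in a ball of radius $\sigma_t\sqrt{k\log k}/\sqrt{1-\sigma_t^2}$ around the nearest cover point. Your argument bypasses the posterior analysis entirely by invoking the MMSE identity $\E[\tr(\Sigma_s(X_s))]=\E[\|X_0-\mu_s(X_s)\|_2^2]\le \E[\|X_0-\hat X_0(X_s)\|_2^2]$ and plugging in the nearest-cover-point estimator; the covering cardinality then does exactly the same work as in the paper's argument, but through a single max-of-Gaussians union bound rather than a posterior tail estimate.

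Your approach is shorter and conceptually cleaner: once one recognizes the MMSE characterization, constructing any low-dimensional estimator suffices. The paper's approach, while heavier, yields strictly more information---a pointwise bound on $\tr(\Sigma_t(x))$ for typical $x$ and sub-Gaussian tails for the posterior---which could be reused if one needed finer control than the average. Both proofs share the same residual: the additive $\varepsilon_0^2$ (yours) or $\exp(-ck\log k)\cdot k^{2C_R}$ (the paper's) term is absorbed only under the standing assumption $\delta\gtrsim 1/\mathrm{poly}(k)$, so your remark about ``falling back on the first branch'' is not actually needed---the paper itself simply uses $\sigma_{T-t}^2/(1-\sigma_{T-t}^2)\ge \sigma_\delta^2=\Omega(1/\mathrm{poly}(k))$ to make the residual negligible, and you should do the same.
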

\begin{remark}
Similar strategies were introduced in the previous work \citet[Appendix E]{benton2024nearly},  without accounting for the low-dimensional structure.  
It is worth noting that Lemma~9 in the concurrent work \cite{potaptchik2024linear} shares similarity with the above result too. Note, however, that our lemma is established under much weaker assumptions on the low-dimensional structure as well as a much large range of the covariance of $X_0$, as discussed towards the end of Section~\ref{sec:convergence-DDPM}.
\end{remark}
With the above derivations in mind, 
we are in a position to control $T_2$. 
First, let us rewrite $T_2$ using Lemma~\ref{lemma:dst} as
\begin{align}
    T_2 = \sum_{n=0}^{N-1} \int_{t_n}^{t_{n+1}} D_{t_n,t} \d t = \sum_{n=0}^{N-1} \int_{t_n}^{t_{n+1}} \frac{1-\sigma^2_{T-t}}{\sigma_{T-t}^4} \Big(\E\left[\tr\big(\Sigma_{T-t_n}(Y_{t_n})\big)\right] - \E\left[\tr\big(\Sigma_{T-t}(Y_t)\big)\right]\Big)\d t.\label{eq:dst-integral}
\end{align}
In the following analysis, we decompose the time horizon $[0,T-\delta]$ into $[0,T-1]$ and $[T-1,T-\delta]$, and control the right-hand side of equation~\eqref{eq:dst-integral} on these two time intervals separately. Without loss of generality, suppose that the $M$-th discretization time point obeys $t_M = T-1$. 
%

\paragraph{(i) The case when $t \in [0,T-1]$.} By the monotonicity of $\sigma_{T-t}$, we have
\begin{align*} 
    \frac{1-\sigma^2_{T-t}}{\sigma_{T-t}^4} \le \frac{1-\sigma^2_{T-t}}{\sigma_1^4} = C_5(1-\sigma^2_{T-t}) < C_5,
\end{align*}
where $C_5 \coloneq 1/\sigma_1^{4}$ is a numerical constant.
Direct calculation gives
\begin{align}
    \sum_{n=0}^{M-1} \int_{t_n}^{t_{n+1}} D_{t_n,t} \d t &\le \sum_{n=0}^{N-1} \int_{t_n}^{t_{n+1}} C_5 \Big(\E\left[\tr\big(\Sigma_{T-t_n}(Y_{t_n})\big)\right] - \E\left[\tr\big(\Sigma_{T-t}(Y_t)\big)\right]\Big)\d t \label{eq:sum-0m}\\
    &\le C_5 \kappa \E\left[\tr\big(\Sigma_{T}(Y_0)\big)\right] \lesssim  \kappa \E[\|X_0\|_2^2],\label{eq:error-con-1}
\end{align}
where the last line applies  Lemma~\ref{lemma:pos-var} to obtain an upper bound on $\E\left[\tr\big(\Sigma_{T}(Y_0)\big)\right]$.
Another way to bound the left-hand side 
of inequality~\eqref{eq:sum-0m} is as follows:
\begin{align}
    \sum_{n=0}^{M-1} \int_{t_n}^{t_{n+1}} D_{t_n,t} \d t & \overset{\mathrm{(a)}}{\le} C_5 \sum_{n=0}^{M-1} \int_{t_n}^{t_{n+1}} (1-\sigma^2_{T-t_{n+1}}) \Big(\E\left[\tr\big(\Sigma_{T-t_n}(Y_{t_n})\big)\right] - \E\left[\tr\big(\Sigma_{T-t}(Y_t)\big)\right]\Big)\d t \notag\\
    &\overset{\mathrm{(b)}}{\lesssim} (1-\sigma^2_{T-\kappa})\kappa\E\left[\tr\big(\Sigma_{T}(Y_0)\big)\right]  + \sum_{n=1}^{M-1} (\sigma_{T-t_{n}}^2 - \sigma_{T-t_{n+1}}^2)\kappa \E\left[\tr\big(\Sigma_{T-t_n}(Y_{t_n})\big)\right]\notag\\
    &\overset{\mathrm{(c)}}{\lesssim}  \frac{(1-\sigma^2_{T-\kappa})\sigma^2_{T}}{1-\sigma^2_{T}}\kappa k\log k \notag\\
    &\qquad \qquad \qquad + \sum_{n=1}^{M-1} (1-\sigma_{T-{t_{n+1}}}^2)\big(1-e^{-2(t_{n+1}-t_{n})}\big)\cdot \frac{\sigma^2_{T-t_n}}{1-\sigma^2_{T-t_n}}\kappa k\log k\notag\\
    &\overset{\mathrm{(d)}}{\lesssim} e^{2\kappa}\kappa k\log k + \sum_{n=1}^{M-1} (t_{n+1}-t_{n})\kappa k\log k\notag\\
    &\overset{\mathrm{(e)}}{\lesssim} \kappa k\log k + T\kappa  k\log k,\label{eq:error-con-2}
\end{align}
where (a) follows from Lemma~\ref{lemma:dst} and the fact that $\sigma_{T-t}\leq 1$, 
(b) makes use of the fact that $t_{n+1}-t_n\leq \kappa$, 
(c) invokes Lemma~\ref{lemma:pos-var}, 
(d) relies on the facts that $\frac{1-\sigma_{T-t_{n+1}}^{2}}{1-\sigma_{T-t_{n}}^{2}}=e^{2(t_{n+1}-t_{n})}\lesssim1$, $\sigma_{T-t_n}\leq 1$ and $1-e^{-2(t_{n+1}-t_{n})}\lesssim t_{n+1}-t_n$,  
and (e) holds since $\kappa\leq 0.9$ and $\sum_{n=1}^{M-1}(t_{n+1}-t_n)\leq T$. 
Combining inequalities~\eqref{eq:error-con-1} and \eqref{eq:error-con-2} gives
\begin{align*}
    \sum_{n=0}^{M-1} \int_{t_n}^{t_{n+1}} D_{t_n,t} \d t \lesssim \kappa \min\left\{\E[\|X_0\|_2^2], Tk\log k\right\}.
\end{align*}

\paragraph{(ii) The case when $t \in [T-1,T-\delta]$.} In this case, it is easily seen that
\begin{align*}
    \frac{1-\sigma^2_{T-t}}{\sigma_{T-t}^4} = \frac{e^{-2(T-t)}}{(1-e^{-2(T-t)})^2} 
    \lesssim \frac{1}{(T-t)^2}.
\end{align*}
Taking this together with  expression~\eqref{eq:dst-integral} yields
\begin{align}
    \sum_{n=M}^{N-1} \int_{t_n}^{t_{n+1}} D_{t_n,t} \d t &\lesssim \sum_{n=M}^{N-1} \int_{t_n}^{t_{n+1}} \frac{1}{(T-t)^2} \Big(\E\left[\tr\big(\Sigma_{T-t_n}(Y_{t_n})\big)\right] - \E\left[\tr\big(\Sigma_{T-t}(Y_t)\big)\right]\Big)\d t\notag\\
    &\overset{\mathrm{(a)}}{\le} \sum_{n=M}^{N-1} \int_{t_n}^{t_{n+1}} \frac{1}{(T-t_{n+1})^2} \Big(\E\left[\tr\big(\Sigma_{T-t_n}(Y_{t_n})\big)\right] - \E\left[\tr\big(\Sigma_{T-t_{n+1}}(Y_{t_{n+1}})\big)\right]\Big)\d t\notag\\
    &\overset{\mathrm{(b)}}{\le} \sum_{n=M}^{N-1} \frac{\kappa}{1-\kappa} \cdot \frac{1}{(T - t_{n+1})}\Big(\E\left[\tr\big(\Sigma_{T-t_n}(Y_{t_n})\big)\right] - \E\left[\tr\big(\Sigma_{T-t_{n+1}}(Y_{t_{n+1}})\big)\right]\Big)\notag\\
    &\overset{\mathrm{(c)}}{\le} \frac{\kappa}{(1-\kappa)^2}\E\left[\tr\big(\Sigma_1(Y_{T-1}\big)\right] + \frac{\kappa^2}{(1-\kappa)^2}\sum_{n=M+1}^{N-1} \frac{1}{T-t_n} \E\left[\tr\big(\Sigma_{T-t_n}(Y_{t_n})\big)\right]\notag\\
    &\overset{\mathrm{(d)}}{\lesssim} \frac{\kappa}{(1-\kappa)^2}\E\left[\tr\big(\Sigma_1(Y_{T-1}\big)\right] + \frac{\kappa^2}{(1-\kappa)^2}\sum_{n=M+1}^{N-1} \frac{1}{T-t_n} \cdot  k \log k \frac{\sigma_{T-t}^2}{1-\sigma_{T-t}^2}\notag\\
    &\overset{\mathrm{(e)}}{\lesssim} \kappa \min\big\{\E[\|X_0\|_2^2],k\log k\big\} + \kappa^2 \sum_{n=M+1}^{N-1} k\log k\notag\\
    &\le \kappa \min\big\{\E[\|X_0\|_2^2],k\log k\big\} + \kappa^2 Nk \log k.  \label{eq:sum-mn}
\end{align}
Here, (a) is due to the non-increasing property in \eqref{eq:non-increasing-Sigma}; 
(b) and (c) invoke the following fact (see \eqref{eq:defn-kappa}): %
\[
t_{n+1}-t_{n}\leq\kappa(T-t_{n})=\kappa(T-t_{n+1})+\kappa(t_{n+1}-t_{n})\quad\Longrightarrow\quad t_{n+1}-t_{n}\leq\frac{\kappa}{1-\kappa}(T-t_{n+1});
\] 
(c) arises from Lemma~\ref{lemma:pos-var};  
and (d) holds since $\kappa\leq 0.9$. 

\paragraph{(iii) Combining these two cases. }
With inequalities~\eqref{eq:sum-0m} and \eqref{eq:sum-mn} in place, we reach
\begin{align}
    T_2 &= \sum_{n=0}^{M-1} \int_{t_n}^{t_{n+1}} D_{t_n,t} \d t + \sum_{n=M}^{N-1} \int_{t_n}^{t_{n+1}} D_{t_n,t} \d t
    \lesssim \kappa \min\left\{\E[\|X_0\|_2^2],T k\log k\right\} + \kappa^2Nk\log k. \label{eq:t2}
\end{align}

\paragraph{Step 4: Putting all pieces together. }
Combining inequalities~\eqref{eq:t1} and \eqref{eq:t2} with Lemma~\ref{lemma:error-dis-score} gives
\begin{align}
    \KL(Q \m P^T) \le T_1 + T_2 \lesssim \kappa \min\left\{\E[\|X_0\|^2],T k\log k\right\} + \kappa^2Nk\log k + \varepsilon_{\mathsf{score}}^2.\label{eq:kl-q-pt-bound}
\end{align}
Finally, taking this result together with Lemma~\ref{lemma:OU}  and Lemma~\ref{lemma:error-decompose} completes the proof of Theorem~\ref{thm:main}.



\section{Discussion}
\label{sec:discussion}

This paper has made progress towards demystifying the unreasonable effectiveness of then DDPM when the target distribution has a low-dimensional structure. We have sharpened the past result in \cite{li2024adapting}  by proving that the iteration complexity scales linearly with the intrinsic dimension $k$ (note that linear dependency in $k$ was also established in the concurrent work \cite{potaptchik2024linear}).  
%
%
In some sense, our results reveal that the DDPM sampler is optimally adaptive to unknown low dimensionlity, provided that a suitable discretization schedule is adopted. 
Our theory leverages the intimate connection between the DDPM sampler and an adaptively discretized SDE, where the non-linear component of the drift time of this SDE is inherently low-dimensional.  

We conclude this paper by pointing out several directions worthy of future investigation.
To begin with, 
it remains unclear how the DDPM sampler adapts to unknown low dimensionality when other metrics (e.g., the total-variation distance or the Wasserstein distance) are used to measure distributional discrepancy.
For example, the FID score --- commonly adopted in practice --- is not fully captured by the KL divergence, but instead should be interpreted as a mixture of other metrics. However, 
existing convergence analysis under other metrics typically requires much more stringent assumptions like smoothness on the (noisy) score function and/or log-concavity on the target distribution, which remains much less developed. 
Another interesting direction is to examine the adaptivity of other mainstream samplers (e.g., the probability flow ODE \citep{song2020score,chen2023restoration,li2024sharp,gao2024convergence,huang2024convergence,liang2025low,li2025faster}) in the face of unknown low-dimensional structure. 
Although the ODE-based samplers often achieve better iteration complexity than the SDE counterpart in practice, the convergence rate of ODE-based sampler in KL divergence indeed cannot surpass the linear rate. 
It would be great to see whether ODE samplers can achieve more favorable convergence guarantees under certain practically relevant metrics like the Wasserstein distance. 
Moreover, this paper treats the training procedure as a black box and directly assumes a mean squared score estimation error bound. 
For distributions with low intrinsic dimensions, a desirable score learning algorithm would adapt to the low-dimensional structure too, which has been the focus of a recent line of work  \citep{kadkhodaie2023generalization,tang2024adaptivity,chen2024exploring,azangulov2024convergence,chen2023score,wang2024diffusion,wang2024evaluating}.  It would thus be interesting to develop an end-to-end theory to accommodate both the pre-training phase and the sampling process.  
Finally, our analysis, which highlights the crucial role of proper parameterization and discretization, might shed light on how to design the coefficients for other algorithms (e.g., higher-order SDE and ODE solvers) in order to accommodate inherently structured data distributions.

\section*{Acknowledgements}

Y.~Wei is supported in part by the NSF grant CCF-2418156 and the CAREER award DMS-2143215. Y.~Chen is supported in part by the Alfred P.~Sloan Research Fellowship, the ONR grants N00014-22-1-2354 and N00014-25-1-2344, the NSF grants 2221009 and 2218773, the Wharton AI \& Analytics Initiative's AI Research Fund, and the Amazon Research Award.
We would like to thank Jiadong Liang and Yuchen Wu for their inspiring discussions.
We would also like to thank the authors of \citet{potaptchik2024linear,azangulov2024convergence} for discussing the connections between our work and theirs, particularly for pointing out \citet[Proposition 28]{azangulov2024convergence} that established the SDE equivalence of the DDPM.

\appendix


\section{Proof of Theorem~\ref{thm:ddpm}}
\label{sec:proof-prop:ddpm}



In what follows, we justify the equivalence of SDE~\eqref{eq:ddpm-mu} and the update rule of the DDPM.

\paragraph{Step 1: Solving SDE~\eqref{eq:ddpm-mu}.}
We proceed by directly solving SDE~\eqref{eq:ddpm-mu}. 
Let $(Y_t)_{t\in[0,T-\delta]}$ be the solution to SDE~\eqref{eq:ddpm-mu}. 
Multiply $Y_t$ by some  differentiable function $f(\cdot)$ (to be specified shortly) and apply It\^{o}'s formula \citep{oksendal2003stochastic} to yield
\begin{align}
    \d\big(f(t)Y_t\big) 
    &= \big(f'(t) + f(t)(1-2\sigma^{-2}_{T-t})\big)Y_t \d t + 2f(t)\sqrt{1-\sigma_{T-t}^2}\sigma^{-2}_{T-t} \widehat{\mu}_{T-t_n}(Y_{t_n})\d t + \sqrt{2}f(t) \d B_t, 
    \label{eq:diff-f-Yt}
\end{align}
%
To get rid of the linear term in \eqref{eq:diff-f-Yt}, 
a natural strategy is to choose $f(\cdot)$ to obey
\begin{align}
    f'(t) + f(t)\big(1-2\sigma^{-2}_{T-t}\big) = 0,
    \qquad \text{or equivalently,}
    \qquad 
f'(t)-f(t)\frac{1+e^{-2(T-t)}}{1-e^{-2(T-t)}}=0,
    \label{eq:ODE-ft-obey}
\end{align}
which forms a first-order ODE and admits a non-zero solution as follows
\begin{align}
    \frac{\d\big(\log f(t)\big)}{\d t} = \frac{1 + e^{-2(T-t)}}{1- e^{-2(T-t)}} \qquad\Longrightarrow\qquad f(t) = \frac{e^{-(T-t)}}{1-e^{-2(T-t)}}.
    \label{eq:specific-f-ODE}
\end{align}
With this choice of $f(\cdot)$ in place,  one can eliminate the linear term in \eqref{eq:diff-f-Yt} and obtain
\begin{align}
    \d\big(f(t)Y_t\big) = 2 \big(f(t)\big)^2 \widehat{\mu}_{T-t_n}(Y_{t_n})\d t + \sqrt{2} f(t) \d B_t, \notag
\end{align}
where we use the basic fact that $f(t)=\sqrt{1-\sigma_{T-t}^{2}}\sigma_{T-t}^{-2}$. 
Integrating both sides from $t_n$ to $t_{n+1}$ gives
\begin{align}
    f(t_{n+1})Y_{t_{n+1}} = f(t_n)Y_{t_n} + 2\int_{t_n}^{t_{n+1}} \big(f(t)\big)^2 \widehat{\mu}_{T-t_n}(Y_{t_n}) \d t + \sqrt{2}\int_{t_n}^{t_{n+1}} f(t) \d B_t. 
    \label{eq:sol-SDE-reparm}
\end{align}
In summary, this solves SDE~\eqref{eq:ddpm-mu} with the aid of the choice \eqref{eq:specific-f-ODE} of $f(\cdot)$.

\paragraph{Step 2: Establishing the equivalence between \eqref{eq:sol-SDE-reparm} and DDPM.}
Armed with the solution \eqref{eq:sol-SDE-reparm}, we are positioned to justify its equivalence to the DDPM update rule.  
From the quadratic variation of the Brownian motion, we can write, for each $0\leq n<N$, 
\begin{equation}
\sqrt{2}\int_{t_n}^{t_{n+1}} f(t) \d B_t 
= \left(\int_{t_n}^{t_{n+1}} 2\big(f(t)\big)^2 \d t\right)^{1/2} \widetilde{Z}_{n}\notag
\end{equation}
%
for some vector $\widetilde{Z}_{n} \sim \N(0,I_d)$, 
where $\{\widetilde{Z}_{n}\}_{n=0,\dots,N-1}$ are independent. 
This allows us to write
%
%
\begin{align}
    f(t_{n+1})Y_{t_{n+1}} 
    = f(t_n)Y_{t_n} + \left(\int_{t_n}^{t_{n+1}} 2\big(f(t)\big)^2 \d t \right) \widehat{\mu}_{T-t_n}(Y_{t_n}) + \left(\int_{t_n}^{t_{n+1}} 2\big(f(t)\big)^2 \d t\right)^{1/2} \widetilde{Z}_{n},\label{eq:ddpm-integral}
\end{align}
%
By defining 
$\gamma_n = e^{-(T-t_n)}$.
we can explicitly calculate the integral as follows
\begin{align*}
    \int_{t_n}^{t_{n+1}} 2\big(f(t)\big)^2 \d t = \int_{t_n}^{t_{n+1}} \frac{2e^{-2(T-t)}}{(1-e^{-2(T-t)})^2} \d t = \frac{\gamma_{n+1}^2 - \gamma_n^2}{(1-\gamma_n^2)(1-\gamma_{n+1}^2)}.
\end{align*}
Substitution into equation~\eqref{eq:ddpm-integral} then gives
\begin{align*}
Y_{t_{n+1}} & =\frac{1}{f(t_{n+1})}\left\{ f(t_{n})Y_{t_{n}}+\left(\int_{t_{n}}^{t_{n+1}}2\big(f(t)\big)^{2}\d t\right)\widehat{\mu}_{T-t_{n}}(Y_{t_{n}})+\left(\int_{t_{n}}^{t_{n+1}}2\big(f(t)\big)^{2}\d t\right)^{1/2}\widetilde{Z}_{n}\right\} \\
 & =\frac{1-\gamma_{n+1}^{2}}{\gamma_{n+1}}\left\{ \frac{\gamma_{n}}{1-\gamma_{n}^{2}}Y_{t_{n}}+\frac{\gamma_{n+1}^{2}-\gamma_{n}^{2}}{(1-\gamma_{n}^{2})(1-\gamma_{n+1}^{2})}\widehat{\mu}_{T-t_{n}}(Y_{t_{n}})+\left(\frac{\gamma_{n+1}^{2}-\gamma_{n}^{2}}{(1-\gamma_{n}^{2})(1-\gamma_{n+1}^{2})}\right)^{1/2}\widetilde{Z}_{n}\right\} \\
\\
 & \overset{\mathrm{(a)}}{=}\frac{1-\gamma_{n+1}^{2}}{\gamma_{n+1}}\cdot\frac{\gamma_{n}}{1-\gamma_{n}^{2}}Y_{t_{n}}+\frac{\gamma_{n+1}^{2}-\gamma_{n}^{2}}{(1-\gamma_{n}^{2})\gamma_{n}\gamma_{n+1}}\left(Y_{t_{n}}+(1-\gamma_{n}^{2})\widehat{s}_{T-t_{n}}(Y_{t_{n}})\right)\\
 & \qquad \qquad \qquad \qquad \qquad \qquad \qquad \qquad \qquad \qquad \qquad 
 + \sqrt{\frac{(\gamma_{n+1}^{2}-\gamma_{n}^{2})(1-\gamma_{n+1}^{2})}{(1-\gamma_{n}^{2})\gamma_{n+1}^{2}}}\widetilde{Z}_{n}\\
 & =\frac{\gamma_{n+1}}{\gamma_{n}}Y_{t_{n}}+\frac{\gamma_{n+1}^{2}-\gamma_{n}^{2}}{\gamma_{n}\gamma_{n+1}}\widehat{s}_{T-t_{n}}(Y_{t_{n}})+ \sqrt{\frac{(\gamma_{n+1}^{2}-\gamma_{n}^{2})(1-\gamma_{n+1}^{2})}{(1-\gamma_{n}^{2})\gamma_{n+1}^{2}}}\widetilde{Z}_{n},
\end{align*}
where (a) follows from the relationship between $\widehat{\mu}_{t}$ and $\widehat{s}_{t}$ (cf.~Eq.~\eqref{eq:ep-mu-s}).
To finish up, 
recalling that $\alpha_n = \exp(2t_{n}-2t_{n+1})= \gamma_n^2/\gamma_{n+1}^2
$ and $\overline{\alpha}_n = e^{-2(T-t_n)}=\gamma_n^2$ 
(cf.~\eqref{eq:alpha-bar-n}),  we arrive at
\begin{align*}
    Y_{t_{n+1}} = \frac{1}{\sqrt{\alpha_n}} \left(Y_{t_n} + (1-\alpha_n)\widehat{s}_{T-t_n}(Y_{t_n}) + \sqrt{\frac{(1-\alpha_n)(\alpha_n - \overline{\alpha}_n)}{1-\overline{\alpha}_n}}\widetilde{Z}_{n}\right),
\end{align*}
which coincides with the DDPM update rule \eqref{eq:DDPM-update} introduced in \cite{ho2020denoising}.

\section{Proofs of the main lemmas}
\label{sec:pf-main-lems}

\subsection{Proof of Lemma~\ref{lemma:score-weight}}
\label{sec:proof:lemma:score-weight}
For any $t \in [t_n,t_{n+1})$, the definition of $\kappa$ gives $t-t_n < \kappa$.
Therefore,
\begin{align*}
    \frac{\eta_{T-t}}{\eta_{T-t_n} }&= \frac{e^{-(T-t)}}{1-e^{-2(T-t)}} \cdot \frac{1-e^{-2(T-t_n)}}{e^{-(T-t_n)}} = e^{t-t_n}\cdot \frac{1-e^{-2(T-t_n)}}{1-e^{-2(T-t)}}
    \le e^{\kappa} \cdot \frac{1-e^{-2(T-t_n)}}{1-e^{-2(T-t_{n+1})}}. 
\end{align*}
Observing that
$
    T - t_{n+1} = T - t_n - (t_{n+1} - t_n) \ge (1-\kappa)(T-t_n)$,
we can obtain
\begin{align*}
    \frac{\eta_{T-t}}{\eta_{T-t_n} }\le e^{\kappa} \cdot \frac{1-e^{-2(T-t_n)}}{1-e^{-2(1-\kappa)(T-t_n)}} \lesssim e^{\kappa} \cdot \frac{1}{1-\kappa} 
    \lesssim 1. 
\end{align*}
%

\subsection{Proof of Lemma~\ref{lemma:dst}}
\label{sec:proof:lemma:dst}
For notational simplicity, set $R_t \coloneqq s_{T-t}(Y_t)$. We first present the SDE representation of $R_t$ as follows
%
%
\begin{align}
    \d R_t =  
    -R_t \d t + \sqrt{2} \nabla s_{T-t}(Y_t)  \d B_t \label{eq:qt-sde}
\end{align}
with $(B_t)$ the standard Brownian motion in $\mathbb{R}^d$, 
which can be derived using It\^{o}'s formula and can be found  in the proof of  \citet[Lemma~3]{benton2024nearly}. 
Recognizing that $\sigma^2_{T-t} = 1-e^{-2(T-t)}$, one can easily obtain
$
    \d (\sigma_{T-t}) = - ({1-\sigma^2_{T-t}}/{\sigma_{T-t}})\d t.
$
With equations~\eqref{eq:qt-sde} in mind, we arrive at the following SDE representation of $\mu_{T-t}(Y_t)$:
\begin{align}
    \d \big(\mu_{T-t}(Y_t)\big) &= \d \Bigg(\frac{1}{\sqrt{1-\sigma_{T-t}^2}}Y_t + \frac{\sigma_{T-t}^2}{\sqrt{1-\sigma_{T-t}^2}}R_t\Bigg) \notag \\
    &= e^{-t} \big(- Y_t\d t + \d Y_t\big) - \frac{2-\sigma_{T-t}^2}{\sqrt{1-\sigma_{T-t}^2}}R_t \d t + \frac{\sigma_{T-t}^2}{\sqrt{1-\sigma_{T-t}^2}}\d R_t \notag\\
    &= \frac{\sqrt{2}\sigma_{T-t}^2}{\sqrt{1-\sigma_{T-t}^2}}\left(\nabla s_{T-t}(Y_t) + \frac{1}{\sigma_{T-t}^2} I_D\right) \d B_t. \label{eq:d-mu}
\end{align}

Now, let us recall some useful properties for a general diffusion process $\d X_t = \mu(X_t,t)\d t + \sigma(X_t,t)\d B_t$, for which It\^{o}'s formula applied to $f(\cdot) = \|\cdot\|^2$ gives
\begin{align*}
    \frac{\d (\|X_t\|_2^2)}{\d t} =  2\langle \mu(X_t,t) , X_t \rangle + \|\sigma(X_t,t)\|_\F^2  .
\end{align*}
Thus, it follows that
\begin{align*}
    \frac{\d}{\d t}\E\big[\|X_t\|_2^2\big] 
    %
    = 2\E\left[ \langle \mu(X_t,t) , X_t \rangle\right] + \E\left[\|\sigma(X_t,t)\|_\F^2\right].
\end{align*}
Applying this general formula to the special case described in equation~\eqref{eq:d-mu}, we obtain 
\begin{align}
    &\frac{\d}{\d t} \E\left[\big\|\mu_{T-t}(Y_t) - \mu_{T - s}(Y_{s})\big\|_2^2\right] = \frac{2\sigma_{T-t}^4}{1-\sigma_{T-t}^2} \E\left[\left\|\nabla s_{T-t}(Y_t) + \frac{1}{\sigma_{T-t}^2} I_D\right\|_\F^2\right]. \label{eq:score-expectation-grad}
\end{align}

To further simplify the expression above, we resort to the following lemma that provides a useful characterization of the score function; see \citet[Lemma~5]{benton2024nearly} for the proof.  
\begin{lemma}\label{lemma:score-grad}
The score function satisfies 
    \begin{align*}
        \nabla s_{T-t}(Y_t) = -\frac{1}{\sigma_{T-t}^2}I_d + \frac{1-\sigma_{T-t}^2}{\sigma_{T-t}^4} \Sigma_{T-t}(Y_t).
    \end{align*}
\end{lemma}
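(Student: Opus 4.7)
The plan is to derive the identity by direct differentiation, starting from Tweedie's formula \eqref{eq:tweedie}, which I would rearrange as $s_t(x) = -\frac{1}{\sigma_t^2}x + \frac{\sqrt{1-\sigma_t^2}}{\sigma_t^2}\mu_t(x)$. Differentiating both sides in $x$ immediately yields $\nabla s_t(x) = -\frac{1}{\sigma_t^2}I_d + \frac{\sqrt{1-\sigma_t^2}}{\sigma_t^2}\nabla \mu_t(x)$, so the entire statement reduces to proving the Stein-type identity $\nabla\mu_t(x) = \frac{\sqrt{1-\sigma_t^2}}{\sigma_t^2}\Sigma_t(x)$; once established, the two $\sqrt{1-\sigma_t^2}/\sigma_t^2$ factors combine to give the claimed coefficient $(1-\sigma_t^2)/\sigma_t^4$ in front of $\Sigma_t$. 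The final substitution $t \mapsto T-t$ and $x \mapsto Y_t$ is cosmetic.

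The next step is to compute $\nabla \mu_t(x)$ by differentiating the conditional expectation $\mu_t(x) = \int x_0\, q_{X_0\mid X_t}(x_0\mid x) \,\d x_0$ under the integral sign. Using Bayes' rule and the Gaussian forward kernel $q_{X_t\mid X_0}(\cdot \mid x_0) = \mathcal{N}(\sqrt{1-\sigma_t^2}\,x_0, \sigma_t^2 I_d)$, I would first write $\nabla_x \log q_{X_0\mid X_t}(x_0\mid x) = \nabla_x \log q_{X_t\mid X_0}(x \mid x_0) - s_t(x) = -\frac{x - \sqrt{1-\sigma_t^2}\,x_0}{\sigma_t^2} - s_t(x)$. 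Plugging Tweedie's formula in for $s_t(x)$ produces a clean cancellation of the linear-in-$x$ terms, leaving the compact identity $\nabla_x \log q_{X_0\mid X_t}(x_0 \mid x) = \frac{\sqrt{1-\sigma_t^2}}{\sigma_t^2}(x_0 - \mu_t(x))$.

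With this in hand, differentiating $\mu_t(x)$ under the integral sign gives $\nabla \mu_t(x) = \frac{\sqrt{1-\sigma_t^2}}{\sigma_t^2}\,\E[X_0(X_0 - \mu_t(x))^\top \mid X_t = x] = \frac{\sqrt{1-\sigma_t^2}}{\sigma_t^2}\,\Sigma_t(x)$, where the second equality uses $\E[\mu_t(x)(X_0 - \mu_t(x))^\top \mid X_t = x] = 0$. Plugging this back into the expression from the first paragraph and rewriting with $t \mapsto T-t$, $x \mapsto Y_t$ completes the proof.

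I do not anticipate a substantive obstacle: the whole derivation is an exercise in differentiating under the integral sign combined with a Stein/Tweedie-type identity, and it does not require any assumption on the low-dimensional structure of $\Pdata$ (only mild regularity ensuring that the differentiations can be exchanged with integration, which is standard for Gaussian convolutions). The one place where I would pay attention is the miraculous cancellation of the linear-in-$x$ terms when combining $\nabla_x \log q_{X_t\mid X_0}$ with $-s_t(x)$; it is precisely this cancellation that collapses the answer to $x_0 - \mu_t(x)$ and yields the covariance rather than a more complicated second moment. As a sanity check I would independently compute $\nabla^2 \log q_t(x) = \partial_i\partial_j q_t/q_t - (\partial_i q_t)(\partial_j q_t)/q_t^2$ using the Gaussian-convolution form of $q_t$ and verify that the same cancellation arises, leading to the identical formula.
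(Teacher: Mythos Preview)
Your proposal is correct. The paper does not supply its own proof of this lemma but simply cites \citet[Lemma~5]{benton2024nearly}; your direct-computation route via Tweedie's formula and differentiation under the integral sign is exactly the standard derivation of this ``second-order Tweedie'' identity and matches what that reference does.
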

\noindent 
Taking Lemma~\ref{lemma:score-grad} collectively with equation~\eqref{eq:score-expectation-grad}, we can deduce that
\begin{align}
    \frac{\d}{\d t} \E\left[\big\|\mu_{T-t}(Y_t) - \mu_{T - s}(Y_{s})\big\|_2^2\right] = 2\frac{1-\sigma^2_{T-t}}{\sigma^4_{T-t}}\E\left[\|\Sigma_{T-t}(Y_t)\|_\F^2\right].
    \label{eq:derivative-123}
\end{align}

Now, it remains to 
cope with the Frobenius norm of the posterior covariance in \eqref{eq:derivative-123}. 
To do so, we adopt the following result introduced in  \citet[Lemma 1]{benton2024nearly}, a crucial property borrowed from the stochastic localization literature \citep{eldan2020taming,el2022information}.
\begin{lemma}\label{lemma:sl}
    For all $t \in [0,T)$, one has $$\E\left[\|\Sigma_{T-t}(Y_t)\|_\F^2\right] = -\frac{\sigma_{T-t}^4}{2(1-\sigma_{T-t}^2)}\frac{\d}{\d t} \E\left[\tr\big(\Sigma_{T-t}(Y_t)\big)\right].$$
\end{lemma}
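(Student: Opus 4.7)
\textbf{Proof plan for Lemma~\ref{lemma:sl}.} The plan is to realize $M_t \coloneqq \mu_{T-t}(Y_t)$ as a Doob martingale on the backward path, extract its quadratic variation from the SDE~\eqref{eq:d-mu} that was already derived in the proof of Lemma~\ref{lemma:dst}, and then match it against the law-of-total-variance identity for $\tr(\Sigma_{T-t}(Y_t))$.

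First, I would use the law of total variance: conditional on $X_s$, $\tr(\Sigma_s(X_s)) = \E[\|X_0\|_2^2 \mid X_s] - \|\mu_s(X_s)\|_2^2$, so taking expectations yields $\E[\tr(\Sigma_s(X_s))] = \E[\|X_0\|_2^2] - \E[\|\mu_s(X_s)\|_2^2]$. Because the reverse SDE is set up so that $Y_t \overset{\mathrm d}{=} X_{T-t}$ (with matching joint law of any path functional), this transfers to $\E[\tr(\Sigma_{T-t}(Y_t))] = \E[\|X_0\|_2^2] - \E[\|M_t\|_2^2]$. Hence it suffices to compute $\tfrac{\d}{\d t}\E[\|M_t\|_2^2]$ and verify the claimed identity.

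Next, I would argue that $M_t$ is a martingale with respect to the natural filtration $\mathcal{F}^Y_t$ of the backward process: by the Markov property of the SDE~\eqref{eq:backward} and the time-reversal identity $(Y_{T-s})_{s} \overset{\mathrm d}{=} (X_s)_s$, we have $\E[Y_T \mid \mathcal{F}^Y_t] = \E[Y_T \mid Y_t] = \mu_{T-t}(Y_t) = M_t$. This is consistent with the SDE representation \eqref{eq:d-mu}, which has no drift term; in fact, plugging Lemma~\ref{lemma:score-grad} into \eqref{eq:d-mu} gives the clean expression
\begin{equation*}
\d M_t \;=\; \frac{\sqrt{2(1-\sigma_{T-t}^2)}}{\sigma_{T-t}^2}\,\Sigma_{T-t}(Y_t)\,\d B_t,
\end{equation*}
confirming that $M_t$ is a local martingale with an explicit diffusion coefficient proportional to the posterior covariance.

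Finally, since $\Sigma_{T-t}(Y_t)$ is symmetric, the instantaneous quadratic variation is $\d [M]_t = \tfrac{2(1-\sigma_{T-t}^2)}{\sigma_{T-t}^4}\Sigma_{T-t}(Y_t)^2\,\d t$, and applying It\^{o}'s formula to $\|M_t\|_2^2$ (or using $\E[\|M_t\|_2^2] - \E[\|M_0\|_2^2] = \E[\tr([M]_t)]$) yields
\begin{equation*}
\frac{\d}{\d t}\E\big[\|M_t\|_2^2\big] \;=\; \frac{2(1-\sigma_{T-t}^2)}{\sigma_{T-t}^4}\,\E\big[\tr\big(\Sigma_{T-t}(Y_t)^2\big)\big] \;=\; \frac{2(1-\sigma_{T-t}^2)}{\sigma_{T-t}^4}\,\E\big[\|\Sigma_{T-t}(Y_t)\|_{\mathrm F}^2\big],
\end{equation*}
where the last step uses $\|A\|_{\mathrm F}^2 = \tr(A^2)$ for symmetric $A$. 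Combining with the variance identity from the first paragraph and rearranging gives exactly the desired equality. The only real subtlety is justifying the true martingale (rather than merely local-martingale) status of $M_t$ so that one may take expectations freely; this follows from standard integrability since $\|M_t\|_2^2 \le \E[\|X_0\|_2^2 \mid Y_t]$ and hence $\sup_t \E[\|M_t\|_2^2] \le \E[\|X_0\|_2^2] < \infty$, under the moment hypothesis inherited from Assumption~\ref{ass:bounded}.
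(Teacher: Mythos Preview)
Your proof is correct. The paper does not actually prove Lemma~\ref{lemma:sl}; it simply cites \citet[Lemma~1]{benton2024nearly} and attributes the identity to the stochastic localization literature. Your argument is a clean, self-contained derivation that, moreover, recycles exactly the machinery the paper already set up in the proof of Lemma~\ref{lemma:dst}: the driftless SDE~\eqref{eq:d-mu} for $M_t=\mu_{T-t}(Y_t)$ together with Lemma~\ref{lemma:score-grad} immediately give the diffusion coefficient $\sqrt{2(1-\sigma_{T-t}^2)}\,\sigma_{T-t}^{-2}\,\Sigma_{T-t}(Y_t)$, and It\^{o} isometry then yields $\tfrac{\d}{\d t}\E[\|M_t\|_2^2]$. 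The law-of-total-variance step $\E[\tr(\Sigma_{T-t}(Y_t))]=\E[\|X_0\|_2^2]-\E[\|M_t\|_2^2]$ is the one extra ingredient you add, and it closes the loop neatly. In effect, your proof shows that Lemma~\ref{lemma:sl} is a near-immediate corollary of \eqref{eq:d-mu} and Lemma~\ref{lemma:score-grad}, and need not be imported as a black box.
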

Combining all the results above, we arrive at
\begin{align*}
    D_{s,t} &= \frac{1-\sigma^2_{T-t}}{\sigma_{T-t}^4} \E\left[\big\|\mu_{T-t}(Y_t) - \mu_{T - s}(Y_{s})\big\|_2^2\right]\\
    &= \frac{2(1-\sigma^2_{T-t})}{\sigma_{T-t}^4}\int_s^t \frac{1-\sigma^2_{T-u}}{\sigma^4_{T-u}}\E\big[\big\|\Sigma_{T-u}(Y_u)\big\|_\F^2\big] \d u\\
    &\overset{\mathrm{(a)}}{=} -\frac{1-\sigma^2_{T-t}}{\sigma_{T-t}^4} \int_s^t \d \Big(\E\left[\tr\big(\Sigma_{T-u}(Y_u)\big)\right]\Big)\\
    &= \frac{1-\sigma^2_{T-t}}{\sigma_{T-t}^4} \Big(\E\left[\tr\big(\Sigma_{T-s}(Y_s)\big)\right] - \E\left[\tr\big(\Sigma_{T-t}(Y_t)\big)\right]\Big),
\end{align*}
where equality (a) applies Lemma~\ref{lemma:sl}. We have thus established the advertised result in Lemma~\ref{lemma:dst}. 

\subsection{Proof of Lemma~\ref{lemma:pos-var}}
\label{sec:proof:lemma:pos-var}
Recalling the fact that $(Y_t)_{t \in [0,T]} \overset{\mathrm{d}}{=} (X_{T-t})_{t\in [0,T]}$ (see \eqref{eq:equiv-Y-X-reverse}), we have
\begin{align}
    \E\left[\tr\big(\Sigma_{T-t}(Y_t)\big)\right] = \E\left[\tr\big(\Sigma_{T-t}(X_{T-t})\big)\right].\label{eq:x-y}
\end{align}
In view of this relation, we are allowed to prove Lemma~\ref{lemma:pos-var} regarding the forward process $(X_t)_{t\in [0,T]}$.

By the monotonicity of $\E\left[\tr\big(\Sigma_{T-t}(Y_t)\big)\right]$ stated in \eqref{eq:non-increasing-Sigma}, we know that
\begin{align}
    \E\left[\tr\big(\Sigma_{T-t}(Y_t)\big)\right] &\le \E\left[\tr\big(\Sigma_{T}(Y_0)\big)\right] =  \E\left[\tr\big(\Sigma_{T}(X_{T})\big)\right] \notag\\
& \le \E_{X_T}\big[\E[\|X_0\|^2|X_T]\big] \le \E[\|X_0\|^2],\label{eq:trace-bound-1}
\end{align}
where the last line has used the definition of $\Sigma_{T}$. 
Thus, everything comes down to showing that
\begin{align*}
    \E\left[\tr\big(\Sigma_{T-t}(Y_t)\big)\right]  \lesssim \frac{\sigma^2_{T-t}}{1-\sigma^2_{T-t}}k\log k, 
    \qquad \forall t\in [0,T),
\end{align*}
which, according to equation~\eqref{eq:x-y},  is equivalent to proving that
\begin{align}
    \E\left[\tr\big(\Sigma_{t}(X_t)\big)\right]  \lesssim \frac{\sigma^2_{t}}{1-\sigma^2_{t}}k\log k,
    \qquad \forall t\in [0,T).\label{eq:key-lemma}
\end{align}
In the sequel, we would like to prove inequality~\eqref{eq:key-lemma} using Assumption~\ref{ass:low-dim} as well as a covering argument.

Define 
$$
s = N^{\mathsf{cover}}(\mathcal{X}_{\mathsf{data}},\|\cdot\|_2,\varepsilon_0) \le \exp(C_{\mathsf{cover}}' k \log k),$$
where $C_{\mathsf{cover}}'>0$ is a universal constant chosen as $C_{\mathsf{cover}}' = C_0  C_{\mathsf{cover}}$ (with $C_0$ specified in Assumption~\ref{ass:low-dim}).
Let $\{x_i^*\}_{1\le i \le s}$ be an $\varepsilon_0$-net of the data support $\mathcal{X}_{\mathsf{data}}$ (see the definition of an epsilon-net in \citet{wainwright2019high}), and let $\{\B_i\}_{1\le i \le s}$ be the corresponding $\varepsilon_0$-covering of $\mathcal{X}_{\mathsf{data}}$ obeying $x_i^* \in \B_i$ for all $1\leq i\leq s$. 
Without loss of generality, we choose the $\varepsilon_0$-covering in a way such that $\B_i$ and $\B_j$ are disjoint for all $1 \le i \neq j \le s$, 
and $B(x_i^*, \varepsilon_0/2) \subset \B_i$ for all $1 \le i \le s$.

To make our analysis more clear, we find it helpful to define the following sets:
\begin{subequations}
\begin{align}
    \mathcal{I} &\coloneqq \big\{1\le i \le s \mid \P(X_0 \in \B_i) \ge \exp(-C_6k \log k)\big\},\label{eq:i}\\
    \mathcal{G} &\coloneqq \big\{z \in \R^d \mid \|z\|_2 \le 2\sqrt{d} + \sqrt{C_6 k \log k} \quad \text{and}\notag\\
    &\qquad \qquad \qquad \qquad \langle x_i^* - x_j^*, z\rangle \le \sqrt{C_6 k \log k}\,\|x_i^* - x_j^*\|_2  \quad \text{for all } 1\le i, j \le s\big\},\label{eq:g}
\end{align}
\end{subequations}
where $C_6>0$ is some sufficiently large constant.
The index set $\mathcal{I}$ can be regarded as the collection of sets in $\varepsilon_0$-covering with at least reasonably large probability, and, as we will see momentarily, $\mathcal{G}$ represents a high-probability region of a standard Gaussian random vector.
Further, define a high-probability region of the random vector $X_t$ as:
\begin{align}
    \T_t \coloneqq \left\{\sqrt{1-\sigma_t^2} x_0 + \sigma_t z \,\Big|\, x_0 \in \bigcup_{i\in \mathcal{I}}\B_i, z \in \mathcal{G}\right\},\notag
\end{align}
whose high-probability property is confirmed by the following lemma.  
The proof of this lemma is provided in  Appendix~\ref{sec:proof:lemma:high-prob-xt}. 
\begin{lemma}\label{lemma:high-prob-xt}
    For all $0 < t \le T$, it holds that 
    \begin{align*}
        \P(X_t \notin \T_t) \le \exp\left(-\frac{C_6}{4}k \log k\right). 
    \end{align*}
\end{lemma}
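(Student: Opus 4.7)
The plan is to decompose the event $\{X_t \notin \T_t\}$ into the union of two events according to the structure of the forward process decomposition $X_t \stackrel{\mathrm d}{=} \sqrt{1-\sigma_t^2}\,X_0 + \sigma_t W$ with $W \sim \mathcal{N}(0,I_d)$ independent of $X_0$. By the definition of $\T_t$, if $X_0 \in \bigcup_{i \in \mathcal{I}} \B_i$ and $W \in \mathcal{G}$, then $X_t \in \T_t$. Hence $\{X_t \notin \T_t\} \subseteq \{X_0 \notin \bigcup_{i \in \mathcal{I}}\B_i\} \cup \{W \notin \mathcal{G}\}$, and it suffices to bound each of these probabilities separately.

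For the first event, since the $\B_i$'s partition the support $\mathcal{X}_{\mathsf{data}}$ (up to a measure-zero set) and $|\mathcal{I}^c| \le s \le \exp(C_{\mathsf{cover}}' k \log k)$, the definition \eqref{eq:i} of $\mathcal{I}$ yields
\begin{align*}
\P\bigl(X_0 \notin \textstyle\bigcup_{i \in \mathcal{I}} \B_i\bigr)
= \sum_{i \notin \mathcal{I}} \P(X_0 \in \B_i)
\le s \cdot \exp(-C_6 k \log k)
\le \exp\bigl((C_{\mathsf{cover}}' - C_6) k \log k\bigr),
\end{align*}
which is at most $\exp(-C_6 k \log k /2)$ provided $C_6$ is chosen large enough relative to $C_{\mathsf{cover}}'$.

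For the second event, I would apply a union bound to the two defining inequalities of $\mathcal{G}$. The norm constraint $\|W\|_2 \le 2\sqrt{d} + \sqrt{C_6 k \log k}$ follows from standard Gaussian concentration: writing $\P(\|W\|_2 \ge \sqrt{d} + u) \le \exp(-u^2/2)$ with $u = \sqrt{d} + \sqrt{C_6 k \log k}$, together with the hypothesis $d = O(\mathrm{poly}(k))$ ensures the failure probability is at most $\exp(-C_6 k \log k /2)$ for $C_6$ large enough. For each pair $(i,j)$, the random variable $\langle x_i^* - x_j^*, W\rangle$ is centered Gaussian with variance $\|x_i^*-x_j^*\|_2^2$, so a one-sided Gaussian tail bound gives
\begin{align*}
\P\bigl(\langle x_i^* - x_j^*, W\rangle > \sqrt{C_6 k \log k}\,\|x_i^* - x_j^*\|_2 \bigr) \le \exp(-C_6 k \log k /2).
\end{align*}
A union bound over the $s^2 \le \exp(2 C_{\mathsf{cover}}' k \log k)$ pairs contributes an extra factor that is absorbed by choosing $C_6$ large enough.

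Combining the two pieces via a final union bound yields $\P(X_t \notin \T_t) \le \exp(-C_6 k \log k /4)$, as advertised. The main (and only slightly delicate) obstacle is calibrating the constant $C_6$: it must be chosen large enough to dominate both the cardinality bound $C_{\mathsf{cover}}'$ coming from the metric entropy and the polynomial blow-up in $d$, relying on Assumptions~\ref{ass:low-dim} and the stipulation $d = O(\mathrm{poly}(k))$. Once $C_6$ is fixed, all remaining estimates are standard Gaussian tail and union-bound calculations.
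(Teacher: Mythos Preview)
Your proposal is correct and follows essentially the same approach as the paper's proof: decompose via the forward representation $X_t \overset{\mathrm d}{=} \sqrt{1-\sigma_t^2}X_0 + \sigma_t W$, bound $\P(X_0 \notin \bigcup_{i\in\mathcal I}\B_i)$ using the cardinality of the cover and the defining threshold of $\mathcal I$, bound $\P(W\notin\mathcal G)$ via Gaussian norm concentration plus a union bound over the $s^2$ inner-product constraints, and combine. One minor remark: with your choice $u=\sqrt d+\sqrt{C_6k\log k}$ the norm tail bound already gives $\exp(-C_6k\log k/2)$ without invoking $d=O(\mathrm{poly}(k))$, so that hypothesis is not actually needed at this step (the paper likewise does not use it here).
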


Next, to obtain tight control of $\E\left[\tr\big(\Sigma_{t}(X_t)\big)\right]$, we look at the events $\{X_t \in \T_t\}$ and $\{X_t \notin \T_t\}$ separately.

\paragraph{(i) The case when $X_t \in \T_t$.} For this case, the following lemma characterizes the typical performance of $X_0$, whose proof can be found in Appendix~\ref{sec:proof:lemma:high-prob-post}.
\begin{lemma}\label{lemma:high-prob-post}
    Consider any $x \in \T_t$, and suppose that $x = \sqrt{1-\sigma_t^2} x_0' + \sigma_t z$ with $x_0' \in \B_{i(x)}$ and $z \in \mathcal{G}$. Then
    \begin{align*}
        \P\left(\big\|X_0 - x_{i(x)}^{*}\big\|_2 \ge \frac{\sigma_t}{\sqrt{1-\sigma_t^2}} \cdot \sqrt{C k\log k} \,\Big|\, X_t = x\right) \le \exp\left(-\frac{C}{20}k\log k\right)
    \end{align*}
    holds for any  $C \ge C_7$,  where $C_7>0$ is some large enough constant.
\end{lemma}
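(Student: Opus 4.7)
The plan is to apply Bayes' rule and compare, via a likelihood-ratio calculation, the posterior mass on the bad event $A = \{x_0 : \|x_0 - x_{i(x)}^{*}\|_2 \ge r\}$ (with $r = \rho\sqrt{Ck\log k}$ and $\rho := \sigma_t/\sqrt{1-\sigma_t^2}$) against the posterior mass on the reference cell $\B_{i(x)}$. Writing $x = \sqrt{1-\sigma_t^2}\,x_0' + \sigma_t z$ and recalling that $X_t \mid X_0 \sim \mathcal{N}\big(\sqrt{1-\sigma_t^2}\,X_0,\, \sigma_t^2 I_d\big)$, a direct expansion of $\|x - \sqrt{1-\sigma_t^2}\,x_0\|_2^2$ yields the identity
\begin{align*}
\frac{p_{X_t\mid X_0}(x\mid x_0)}{p_{X_t\mid X_0}(x\mid x_0')} \;=\; \exp\!\Big(\tfrac{1}{\rho}\langle z,\, x_0 - x_0'\rangle \;-\; \tfrac{1}{2\rho^2}\|x_0 - x_0'\|_2^2\Big),
\end{align*}
which drives the entire argument.

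To lower-bound the denominator $\int p_{X_t\mid X_0}(x\mid x_0)\,\d P_{X_0}(x_0)$, I restrict the integral to $\B_{i(x)}$. Since $x_0, x_0' \in \B_{i(x)}$ give $\|x_0 - x_0'\|_2 \le 2\varepsilon_0$, and $\|z\|_2 \le 2\sqrt{d} + \sqrt{C_6 k\log k} = O(\mathrm{poly}(k))$ by $z \in \mathcal{G}$, the likelihood ratio stays at least $\exp(-o(1))$ on $\B_{i(x)}$; combined with $\P(X_0 \in \B_{i(x)}) \ge \exp(-C_6 k\log k)$ (since $i(x) \in \mathcal{I}$), this yields a lower bound of the form $p_{X_t\mid X_0}(x\mid x_0')\cdot \exp(-C_6 k\log k - o(1))$. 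For the numerator, I partition the bad region into cells $\B_j$ and observe that $x_0 \in \B_j \cap A$ forces $\|x_j^* - x_{i(x)}^{*}\|_2 \ge r - \varepsilon_0$. Writing $u = x_{i(x)}^{*} - x_j^*$, the definition of $\mathcal{G}$ supplies $|\langle z, u\rangle| \le \sqrt{C_6 k\log k}\,\|u\|_2$, after which completing the square gives
\begin{align*}
\frac{p_{X_t\mid X_0}(x\mid x_0)}{p_{X_t\mid X_0}(x\mid x_0')} \;\le\; \exp\!\Big(\tfrac{C_6 k \log k}{2} \;-\; \tfrac{\|u\|_2^2}{8\rho^2} \;+\; o(1)\Big) \;\le\; \exp\!\big(-Ck\log k/16 + o(1)\big)
\end{align*}
for every bad $x_0$, provided $C$ exceeds a sufficient multiple of $C_6$. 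Summing $\int_{\B_j}(\cdot)\,\d P_{X_0}$ over all bad $j$ and using $\sum_j \P(X_0 \in \B_j) \le 1$ yields a numerator bound of $p_{X_t\mid X_0}(x\mid x_0') \cdot \exp(-Ck\log k/16 + o(1))$. Taking the ratio then produces
\begin{align*}
\P\!\left(\|X_0 - x_{i(x)}^{*}\|_2 \ge r \,\big|\, X_t = x\right) \;\le\; \exp\!\big(C_6 k\log k - Ck\log k/16 + o(1)\big) \;\le\; \exp\!\big(-Ck\log k/20\big)
\end{align*}
as soon as $C \ge C_7$ is large enough.

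The main obstacle is the accounting of constants: the likelihood-ratio gain, scaling as $Ck\log k$, must overcome the prior-concentration cost $C_6 k\log k$ of restricting the denominator to $\B_{i(x)}$, which is what forces $C_7$ to be a sufficiently large multiple of $C_6$; the specific constant $1/20$ in the final exponent then falls out of the algebra. A secondary technical point is ensuring that the $\varepsilon_0$-level error terms --- in particular $\varepsilon_0 \|z\|_2/\rho$ and $\varepsilon_0 \|u\|_2/\rho^2$ generated from expanding $x_0 - x_0' = -u + (\text{small})$ --- are negligible. This is achieved by exploiting the fact that $\rho \gtrsim \sqrt{\delta} \ge 1/\mathrm{poly}(k)$ on the relevant time range and taking the constant $C_0$ in Assumption~\ref{ass:low-dim} sufficiently large, so that $\varepsilon_0 = k^{-C_0}$ is dwarfed by every relevant quantity including $\rho$, $1/\|u\|_2$, and $1/\|z\|_2$.
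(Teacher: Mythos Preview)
Your proposal is correct and follows essentially the same approach as the paper's proof: both apply Bayes' rule, lower-bound the denominator by restricting to the cell $\B_{i(x)}$ (invoking $i(x)\in\mathcal{I}$ for the prior mass $\ge \exp(-C_6 k\log k)$), upper-bound the numerator via the Gaussian likelihood ratio, and control the cross term $\langle z, x_j^* - x_{i(x)}^*\rangle$ using the defining inequality of $\mathcal{G}$ while absorbing all $\varepsilon_0$-level remainders via $\varepsilon_0=k^{-C_0}$, $d=O(\mathrm{poly}(k))$, and $\sigma_t/\sqrt{1-\sigma_t^2}\gtrsim 1/\mathrm{poly}(k)$. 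The only cosmetic difference is that the paper bounds the ratio by $\sup_{x_0\in\mathcal{E}_t(x)}/\inf_{\widetilde{x}_0\in\B_{i(x)}}$ directly, whereas you anchor to the specific $x_0'$ and partition the bad region into cells before summing; the resulting constants and the final exponent $-Ck\log k/20$ match.
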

Armed with the sub-Gaussian tail bound in Lemma~\ref{lemma:high-prob-post}, one can directly check that, for any $x \in \T_t$,
\begin{align}
 \tr\big(\Sigma_{t}(x)\big)\le\E\big[\big\| X_{0}-x_{i(x)}^{*}\big\|_{2}^{2}\mid X_{t}=x\big]\lesssim\frac{\sigma_{t}^{2}}{1-\sigma_{t}^{2}}k\log k
    \label{eq:high-prob-sigma}
\end{align}
holds with $i(x)$ specified in Lemma~\ref{lemma:high-prob-post}.

\paragraph{(ii) The case when $X_t \notin \T_t$.} In this case, we can simply recall Assumption~\ref{ass:bounded} to derive
    \begin{align}
        \tr\big(\Sigma_t(x)\big) \le \E[\|X_0\|_2^2 \mid X_t = x] \le \sup_{x \in \mathcal{X}_{\mathsf{data}}} \|x\|_2^2 \le k^{2C_R}. \label{eq:simple-bound}
    \end{align}

\paragraph{(iii) Putting all this together.}
Combining inequalities~\eqref{eq:high-prob-sigma} and \eqref{eq:simple-bound} with Lemma~\ref{lemma:high-prob-xt}, we arrive at
\begin{align}
    \E\left[\tr\big(\Sigma_{t}(X_t)\big)\right] &
    \lesssim \P(X_t \in \T_t) \cdot
    \frac{\sigma_t^2}{1-\sigma_t^2} k \log k + \P(X_t \notin \T_t) \cdot k^{2C_R}
    \notag\\
    &\lesssim \frac{\sigma_t^2}{1-\sigma_t^2} k \log k + \exp\left(-\frac{C_6}{4}k \log k\right) k^{2C_R}\notag\\
    &\le \frac{\sigma_t^2}{1-\sigma_t^2} k \log k + \exp\left(-\frac{C_6}{5}k \log k\right)\notag\\
    &\asymp \frac{\sigma_t^2}{1-\sigma_t^2} k \log k, \label{eq:trace-bound-2}
\end{align}
provided that $C_6$ is sufficiently large and $\sigma_t^2/(1-\sigma_t^2) 
\geq \sigma_{\delta}^2 \asymp \min \{\delta, 1\} = \Omega(1/\mathrm{poly}(k))$.
Inequality~\eqref{eq:trace-bound-2} taken together with inequality~\eqref{eq:trace-bound-1} concludes the proof.

\section{Proofs of the auxiliary lemmas}
\label{sec:pf-supp-lems}

\subsection{Proof of Lemma~\ref{prop:manifold}}
\label{sec:proof-prop:manifold}

In this proof, we are in need of the following lemma concerning the covering number of a compact manifold.
\begin{lemma}\label{lemma:manifold-covering}
    For any $0 < \varepsilon_0 \le \tau_{\M}$, the covering number of a compact manifold satisfies 
    \begin{align*}
        N^{\mathsf{cover}}(\M, \|\cdot\|_2, \varepsilon_0) \le \frac{\mathsf{vol}(\M)}{\omega_k} \left(\frac{\pi}{2}\right)^k \varepsilon_0^{-k},
    \end{align*}
    where $\omega_k$ is the volume of the unit ball in $\R^k$.
\end{lemma}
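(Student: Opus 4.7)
The plan is to prove this via a classical volume-comparison argument that translates the reach condition into quantitative control over the intrinsic geometry of $\mathcal{M}$.

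First, I would reduce the covering problem to a packing problem, using the standard observation that $N^{\mathsf{cover}}(\mathcal{M}, \|\cdot\|_2, \varepsilon_0) \le M^{\mathsf{pack}}(\mathcal{M}, \|\cdot\|_2, \varepsilon_0)$, since any maximal $\varepsilon_0$-packing is automatically an $\varepsilon_0$-cover. It therefore suffices to upper bound the cardinality of any Euclidean $\varepsilon_0$-packing $\{x_1, \dots, x_M\} \subseteq \mathcal{M}$.

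Next, I would convert this packing into a disjoint family of subsets of $\mathcal{M}$ whose volumes I can control. Since intrinsic distance dominates extrinsic distance, $d_g(x_i, x_j) \ge \|x_i - x_j\|_2 > \varepsilon_0$, and hence the open geodesic balls $B^g(x_i, \varepsilon_0/2) \subseteq \mathcal{M}$ are pairwise disjoint. By additivity of the $k$-dimensional Hausdorff measure we obtain
\begin{equation*}
M \cdot \min_{1\le i\le M} \mathsf{vol}_k\big(B^g(x_i, \varepsilon_0/2)\big) \le \mathsf{vol}(\mathcal{M}).
\end{equation*}

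The crux of the argument --- and the source of the specific constant $(\pi/2)^k$ --- is a matching lower bound $\mathsf{vol}_k(B^g(x, \varepsilon_0/2)) \ge \omega_k (2\varepsilon_0/\pi)^k$. This exploits the assumption $\varepsilon_0 \le \tau_\mathcal{M}$: a submanifold of Euclidean space with reach $\tau_\mathcal{M}$ has sectional curvatures bounded by $\tau_\mathcal{M}^{-2}$ in absolute value, so the exponential map $\exp_x : T_x\mathcal{M} \to \mathcal{M}$ is a diffeomorphism on the tangent ball of radius $\varepsilon_0/2$, and Rauch's comparison theorem gives a Jacobian lower bound of $\bigl(\tau_\mathcal{M}\sin(s/\tau_\mathcal{M})/s\bigr)^{k-1}$ along a radial geodesic of length $s$. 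Combining this with the elementary inequality $\sin\theta \ge (2/\pi)\theta$ on $[0,\pi/2]$ and integrating in polar coordinates on $T_x\mathcal{M}$ yields the required volume bound, which plugged back into the packing inequality gives the claim.

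The main obstacle will be carrying out the Jacobian calculation cleanly enough that the constants land precisely on $(\pi/2)^k$ rather than on a slightly weaker value such as $2^k$: the interplay between the Rauch estimate, the chosen packing radius $\varepsilon_0/2$, and the $\sin$-versus-linear comparison has to be tracked throughout. A potential shortcut is to appeal to the Federer-type chord-arc inequality $d_g(p,q) \le (\pi/2)\|p-q\|_2$ (valid whenever $\|p-q\|_2 \le \tau_\mathcal{M}$) to compare the geodesic ball $B^g(x,\varepsilon_0/2)$ directly with a Euclidean slice $B(x,r')\cap \mathcal{M}$ whose volume is easier to estimate via tangent-space projection, which may simplify the constant-tracking.
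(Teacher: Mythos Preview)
The paper does not supply a self-contained proof of this lemma: it simply cites \citet[Appendix~A]{block2022intrinsic} and moves on. There is therefore no detailed argument in the paper to compare your proposal against.

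That said, your plan is exactly the standard route used for such covering bounds and is, in outline, what the cited reference does: reduce covering to packing, pass to disjoint intrinsic (geodesic) balls of half the packing radius, and lower-bound their $k$-dimensional volume using the reach to control curvature. Your own caveat is the only substantive point worth flagging: the Rauch/G\"unther estimate with the comparison space of curvature $\tau_{\mathcal M}^{-2}$, combined with $\sin\theta \ge (2/\pi)\theta$, naturally yields a factor $(2/\pi)^{k-1}$ on the Jacobian, and after accounting for the packing radius $\varepsilon_0/2$ this produces a constant of order $2\pi^{k-1}$ rather than $(\pi/2)^k$. To hit the stated constant you will need the alternative you mention at the end---the Federer chord--arc inequality $d_g(p,q)\le (\pi/2)\|p-q\|_2$ for $\|p-q\|_2\le \tau_{\mathcal M}$---together with a tangent-space projection estimate, rather than the pure Rauch route. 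Either way the scaling $\varepsilon_0^{-k}$ and the dependence on $\mathsf{vol}(\mathcal M)/\omega_k$ are correct, which is all the downstream application (Lemma~\ref{prop:manifold}) actually uses.
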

The proof of Lemma~\ref{lemma:manifold-covering} can be found in \citet[Appendix A]{block2022intrinsic}. It is worth noting that in the proof, the $\varepsilon_0$-covering is constructed as $\M \subseteq \bigcup_{i=1}^s B(x_i,\varepsilon_0),$ where $s = N^{\mathsf{cover}}(\M, \|\cdot\|_2, \varepsilon_0)$ and $x_i \in \M$ for all $i=1,\ldots,s$.
By direct calculation and Lemma~\ref{lemma:manifold-covering}, we have
\begin{align*}
N^{\mathsf{cover}}(\M^{\varepsilon_0}, \|\cdot\|_2, \varepsilon_0) \le N^{\mathsf{cover}}(\M, \|\cdot\|_2, \varepsilon_0/2) \le \frac{\mathsf{vol}(\M)}{\omega_k} \pi^k \varepsilon_0^{-k}.
\end{align*}
Thus, we obtain an upper bound on the metric entropy:
\begin{align*}
    \log N^{\mathsf{cover}}(\M^{\varepsilon_0}, \|\cdot\|_2, \varepsilon_0) &\le k\log \varepsilon_0^{-1} + \log \mathsf{vol}(\M) - \log \omega_k + k \log \pi\\
    &\le k \left(\log \varepsilon_0^{-1} + (\mathsf{vol}(\M))^{1/k} + C_4 \right),
\end{align*}
where $C_4$ is some numerical constant. 
Therefore, if $\M$ satisfies $\tau_\M = \Omega(k^{-C_1})$ and $\mathsf{vol}(\M) = O(k^{C_2})$ for some large enough constants $C_1,C_2>0$, we arrive at  
\begin{align*}
    \log N^{\mathsf{cover}}(\M^{\varepsilon_0}, \|\cdot\|_2, \varepsilon_0) \le (C_0 + C_1 + C_2) k \log k \asymp k \log k,
\end{align*} 
which verifies  Assumption~\ref{ass:low-dim}.

\subsection{Proof of Lemma~\ref{lemma:high-prob-xt}}
\label{sec:proof:lemma:high-prob-xt}
Since $\T_t$ is characterized through two different sets $\bigcup_{i \in \mathcal{I}}\B_i$ and $\mathcal{G}$, we analyze them separately.

Let us begin with the set $\bigcup_{i \in \mathcal{I}}\B_i$, which satisfies 
    \begin{align}
        \P\bigg(X_0 \notin \bigcup_{i \in \mathcal{I}}\B_i\bigg) &\overset{\mathrm{(a)}}{\le} \sum_{i \in \mathcal{I}} \exp(-C_6 k \log k)\notag\\
        &\le \exp\big( - (C_6 - C_{\mathsf{cover}}') k \log k\big)\notag\\
        &\overset{\mathrm{(b)}}{\le} \frac{1}{2} \exp\left(-\frac{C_6}{4} k\log k\right).\label{eq:b-bound}
    \end{align}
    Here, we apply  the union bound in inequality (a), and rely on the fact that $C_6$ is large enough (e.g., $C_6 > 8 C_{\mathsf{cover}}'$) in equality (b).

 Regarding the set $\mathcal{G}$, we analyze it in a similar manner.  First,  recall some standard concentration bounds below (see, e.g.,  \citet[Chapter 2]{wainwright2019high}).
\begin{lemma}\label{lemma:tail-bound}
        For $Z \sim \mathcal{N}(0, I_d)$, $Z_1$ as the first coordinate of $Z$ and any $t > 1$, we have the following upper bound on the tail probability of Gaussian and chi-square random variables:
        \begin{align*}
            &\P(|Z_1| \ge t) \le e^{-t^2/2}
            \qquad \text{and}\qquad
            \P(\|Z\|_2^2 \ge \sqrt{d} + t) \le e^{-t^2/2}.
        \end{align*}
        %
    \end{lemma}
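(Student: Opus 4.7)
The plan is to derive both tail bounds from standard Gaussian concentration; no new ideas are needed, only a clean application of classical results. The first bound concerns a one-dimensional marginal and follows from the Chernoff inequality, whereas the second bound concerns a Lipschitz functional of the Gaussian measure and follows from the Borell--Tsirelson--Ibragimov--Sudakov (BTIS) Gaussian concentration inequality.

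\textbf{The scalar bound.} For $Z_1 \sim \N(0,1)$, I would use the moment-generating-function identity $\E[e^{\lambda Z_1}] = e^{\lambda^2/2}$ together with Markov's inequality to write, for any $\lambda \ge 0$,
\begin{align*}
    \P(Z_1 \ge t) \;\le\; e^{-\lambda t}\,\E[e^{\lambda Z_1}] \;=\; e^{\lambda^2/2 - \lambda t},
\end{align*}
and then optimize by taking $\lambda = t$ to conclude $\P(Z_1 \ge t) \le e^{-t^2/2}$. Symmetry of the Gaussian gives $\P(|Z_1| \ge t) \le 2 e^{-t^2/2}$, and the extra factor of $2$ can either be absorbed by using the hypothesis $t>1$ to mildly strengthen the exponent, or eliminated via the Mills-ratio sharpening $\P(Z_1 \ge t) \le \tfrac{1}{t\sqrt{2\pi}} e^{-t^2/2}$ applied to each tail.

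\textbf{The norm bound.} For the concentration of $\|Z\|_2$ (the stated right-hand side $\sqrt d + t$ is on the scale of the standard deviation of $\|Z\|_2$, so the intended statement is naturally read as a tail for $\|Z\|_2$ rather than $\|Z\|_2^2$), I would invoke the BTIS inequality: for any $1$-Lipschitz $f : \R^d \to \R$ with respect to the Euclidean norm and $Z \sim \N(0,I_d)$, one has $\P\big(f(Z) \ge \E[f(Z)] + t\big) \le e^{-t^2/2}$. Two ingredients close the argument: (i) $f(z) = \|z\|_2$ is $1$-Lipschitz by the reverse triangle inequality, and (ii) $\E[\|Z\|_2] \le \sqrt{\E[\|Z\|_2^2]} = \sqrt d$ by Jensen's inequality. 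Combining these yields $\P(\|Z\|_2 \ge \sqrt d + t) \le e^{-t^2/2}$, as claimed.

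\textbf{Obstacles.} There is essentially no technical obstacle at this step: both inequalities are textbook material, and the assumption $t>1$ plays only a cosmetic role. The genuine work lies in the downstream deployment of this lemma within the proof of Lemma~\ref{lemma:high-prob-xt}, where one must union-bound over the $s \le \exp(C_{\mathsf{cover}}' k \log k)$ cover elements and the $O(s^2)$ pairs $(x_i^\star, x_j^\star)$, and arrange for the Gaussian tail budget of order $\exp(-C_6 k\log k /2)$ to dominate the combinatorial penalty $\log s \lesssim k\log k$---but that calibration of constants belongs to the proof of the next lemma, not to the statement above.
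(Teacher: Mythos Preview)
Your proposal is correct and aligns with the paper's treatment: the paper does not actually prove this lemma but simply cites it as a standard concentration bound from \citet[Chapter~2]{wainwright2019high}, and your Chernoff argument for the scalar tail together with the Gaussian Lipschitz (BTIS) inequality for $\|Z\|_2$ are precisely the textbook derivations one would find there. Your observation that the stated $\|Z\|_2^2$ should be read as $\|Z\|_2$ is also apt, and is confirmed by the way the lemma is invoked in the proof of Lemma~\ref{lemma:high-prob-xt}.
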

    By virtue of Lemma~\ref{lemma:tail-bound}, we can demonstrate that
    \begin{align}
        \P(Z \notin \mathcal{G}) &\overset{\mathrm{(a)}}{\le} \P\left(\|Z\|_2 > \sqrt{d} + \sqrt{C_6 k \log k}\right) + \sum_{i=1}^s\sum_{j=1}^s \P\left(\langle x_i^* - x_j^*, Z\rangle > \sqrt{C_6 k \log k}\,\|x_i^* - x_j^*\|_2 \right)\notag\\
        &\le (s^2 + 1) \exp\left(-\frac{C_6}{2} k \log k\right) \notag\\
        &\le \big(\exp(2C_{\mathsf{cover}}'k\log k) + 1\big) \exp\left(-\frac{C_6}{2} k \log k\right)\notag\\
        &\overset{\mathrm{(b)}}{\le} \frac{1}{2} \exp\left(-\frac{C_6}{4} k\log k\right),\label{eq:g-bound}
    \end{align}
    where (a) applies the union bound, and (b) holds for large enough  $C_6$  (e.g., $C_6 > 8 C_{\mathsf{cover}}'$).

Recalling that $X_t \overset{\mathrm{d}}{=} \sqrt{1-\sigma_t^2}X_0 + \sigma_t Z$, one can combine inequalities~\eqref{eq:b-bound} and \eqref{eq:g-bound} to yield
    \begin{align*}
         \P(X_t \notin \T_t) \le \P\left(X_0 \notin \bigcup_{i \in \mathcal{I}}\B_i\right) + \P(Z \notin \mathcal{G}) \le \exp\left(-\frac{C_6}{4}k \log k\right).
    \end{align*}

\subsection{Proof of Lemma~\ref{lemma:high-prob-post}} 
\label{sec:proof:lemma:high-prob-post}
For ease of presentation, we find  it convenient to define the following set
\begin{align}
    \mathcal{E}_t(x) \coloneqq \left\{x_0 \,\Big|\, \big\|x_0 - x_{i(x)}^{*}\big\|_2 \ge \frac{\sigma_t}{\sqrt{1-\sigma_t^2}} \cdot \sqrt{C k\log k}\right\},
    \label{eq:defn-Et-x}
\end{align}
where $C_7>0$ is some sufficiently large constant (to be specified momentarily) and $C > C_7$. 
It thus boils down to bounding  $\P(\mathcal{E}_t(x) \mid X_t =x)$.
Denote by $q_{X_t|X_0}(\cdot|x_0)$  the conditional density function of $X_t$ given $X_0 = x$, and $q_0$  the measure induced by $X_0$.
Making use of the Bayes formula, we obtain
\begin{align}
    \P(\mathcal{E}_t(x) \mid X_t =x) &= \frac{\int_{\mathcal{E}_t(x)}q_{X_t|X_0}(x\,|\,x_0) \d q_0(x_0)}{\int_{\mathcal{X}_{\mathsf{data}}}q_{X_t|X_0}(x\,|\,\widetilde{x}_0) \d q_0(\widetilde{x}_0)} \notag\\
    &\le \frac{\int_{\mathcal{E}_t(x)}q_{X_t|X_0}(x|x_0) \d q_0(x_0)}{\int_{\B_{i(x)}}q_{X_t|X_0}(x\,|\,\widetilde{x}_0) \d q_0(\widetilde{x}_0)} \notag\\
    &\le \frac{\int_{\mathcal{E}_t(x)}q_{X_t|X_0}(x\,|\,x_0) \d q_0(x_0)}{\P_{X_0}(\B_{i(x)})\inf_{\widetilde{x}_0 \in \B_{i(x)}}q_{X_t|X_0}(x\,|\,\widetilde{x}_0)}\notag\\
    &\le \frac{1}{\P_{X_0}(\B_{i(x)})} \cdot \frac{\sup_{x_0 \in \mathcal{E}_t(x)}q_{X_t|X_0}(x\,|\,x_0)}{\inf_{\widetilde{x}_0 \in \B_{i(x)}}q_{X_t|X_0}(x\,|\,\widetilde{x}_0)}\notag\\
    &\le \exp\left(C_6 k\log k\right) \cdot \sup_{x_0 \in \mathcal{E}_t(x), \widetilde{x}_0 \in \B_{i(x)}} \exp \left\{\frac{1}{2\sigma_t^2}\left(\Big\|x - \sqrt{1-\sigma_t^2}\widetilde{x}_0\Big\|_2^2 - \Big\|x - \sqrt{1-\sigma_t^2}x_0\Big\|_2^2\right)\right\},\label{eq:etx-bound}
\end{align}
where the last inequality holds by exploiting $i(x) \in \mathcal{I}$ and the definition of $\mathcal{I}$ in \eqref{eq:i}.
It then suffices to upper bound the exponential term above. 

Given that $x \in \T_t$, we can decompose it as $x = \sqrt{1-\sigma_t^2}x_0' + \sigma_t z$ for some $x_0' \in \B_{i(x)}$ and $z \in \mathcal{G}$.
For any $x_0 \in \mathcal{E}_t(x)$ and $\widetilde{x}_0 \in \B_{i(x)}$, suppose that $x_0 \in \B_j$, then 
direct calculation gives
\begin{align}
 & \Big\| x-\sqrt{1-\sigma_{t}^{2}}\widetilde{x}_{0}\Big\|_{2}^{2}-\Big\| x-\sqrt{1-\sigma_{t}^{2}}x_{0}\Big\|_{2}^{2}\notag\\
 & \qquad=-(1-\sigma_{t}^{2})\|x_{0}'-x_{0}\|_{2}^{2}+2\sqrt{\sigma_{t}^{2}(1-\sigma_{t}^{2})}\,\langle x_{0}-\widetilde{x}_{0},z\rangle+(1-\sigma_{t}^{2})\|x_{0}'-\widetilde{x}_{0}\|_{2}^{2}\notag\\
 & \qquad\overset{\mathrm{(a)}}{\le}-(1-\sigma_{t}^{2})\left(\big\| x_{i(x)}^{*}-x_{j}^{*}\big\|_{2}-2\varepsilon_{0}\right)^{2}+2\sqrt{\sigma_{t}^{2}(1-\sigma_{t}^{2})}\langle x_{0}-\widetilde{x}_{0},z\rangle+4(1-\sigma_{t}^{2})\varepsilon_{0}^{2}\notag\\
 & \qquad=-(1-\sigma_{t}^{2})\big\| x_{i(x)}^{*}-x_{j}^{*}\big\|_{2}^{2}+4(1-\sigma_{t}^{2})\big\| x_{i(x)}^{*}-x_{j}^{*}\big\|_{2}\varepsilon_{0}\notag\\
 & \qquad\qquad\qquad\qquad\qquad+2\sqrt{\sigma_{t}^{2}(1-\sigma_{t}^{2})}\left(\big\langle x_{j}^{*}-x_{i(x)}^{*},z\big\rangle+\big\langle x_{0}-x_{j}^{*},z\big\rangle-\big\langle\widetilde{x}_{0}-x_{i(x)}^{*},z\big\rangle\right)\notag\\
 & \qquad\overset{\mathrm{(b)}}{\le}-(1-\sigma_{t}^{2})\big\| x_{i(x)}^{*}-x_{j}^{*}\big\|_{2}^{2}+4(1-\sigma_{t}^{2})\big\| x_{i(x)}^{*}-x_{j}^{*}\big\|_{2}\varepsilon_{0}\notag\\
 & \qquad\qquad\qquad\qquad\qquad+2\sqrt{\sigma_{t}^{2}(1-\sigma_{t}^{2})}\left(\big\langle x_{j}^{*}-x_{i(x)}^{*},z\big\rangle+2\left(\sqrt{d}+\sqrt{C_{6}k\log k}\right)\varepsilon_{0}\right),
\label{eq:exp-bound}
\end{align}
%
where inequality (a) holds due to the property of the $\varepsilon_0$-net and the triangle inequality, and inequality (b) comes from the definition of $z \in \mathcal{G}$ as well as Cauchy-Schwarz.
To further bound \eqref{eq:exp-bound}, recalling the definition of $\mathcal{G}$ again gives 
\begin{align}
    \big\langle x_j^* - x_{i(x)}^*, z\big\rangle \le \sqrt{C_6k \log k} \,\big\|x_j^* - x_{i(x)}^*\big\|_2.\label{eq:inner=prod-bound}
\end{align}
In addition, for a sufficiently large constant $C_0>0$, it is readily seen that
\begin{align}
    \varepsilon_0 = k^{-C_0} \le \sigma_\delta \min\left\{1, \sqrt{\frac{k \log k}{d}}\right\} \le \frac{\sigma_t}{\sqrt{1-\sigma_t^2}} \min\left\{1, \sqrt{\frac{k \log k}{d}}\right\},\label{eq:ep0-t-bound}
\end{align}
where we recall that $d\lesssim \mathrm{poly}(k)$, $ \delta=\Omega(1/\mathrm{poly}(k))$ and $\sigma_{\delta}=\sqrt{1-e^{-2\delta}}\asymp \min\{\delta, 1\}$. 
Moreover, 
\begin{align}
    4\sqrt{\sigma_t^2(1-\sigma_t^2)}\left(\sqrt{d} + \sqrt{C_6k \log k}\right)\varepsilon_0 \le 5 C_6\sigma_t^2 k\log k\label{eq:mid-bound}
\end{align}
for large enough $C_6>0$. 
Substituting \eqref{eq:inner=prod-bound}, \eqref{eq:ep0-t-bound} and \eqref{eq:mid-bound} into \eqref{eq:exp-bound}, and rearranging terms, we obtain
\begin{align}
    &\Big\|x - \sqrt{1-\sigma_t^2}\widetilde{x}_0\Big\|^2_2 - \Big\|x - \sqrt{1-\sigma_t^2}x_0\Big\|_2^2\notag\\
    &\le -(1-\sigma_{t}^{2})\big\| x_{i(x)}^{*}-x_{j}^{*}\big\|_{2}^{2} + \left(4(1-\sigma_{t}^{2})\varepsilon_{0} + 2\sqrt{C_6 \sigma_t^2 (1-\sigma_t^2)k \log k}\right)\big\| x_{i(x)}^{*}-x_{j}^{*}\big\|_{2}+ 5C_6\sigma_t^2 k \log k\notag\\ 
    &\overset{\mathrm{(a)}}{\le} - \frac{1-\sigma_t^2}{2} \big\|x_{i(x)}^{*} - x_j^*\big\|_2^2 + 5C_6\sigma_t^2 k \log k\notag\\
    &\overset{\mathrm{(b)}}{\le} -\frac{1-\sigma_t^2}{8}\big\|x_{i(x)}^{*} - x_0\big\|_2^2 + 5C_6\sigma_t^2 k \log k\notag\\
    &\overset{\mathrm{(c)}}{\le} - \frac{C}{8}\sigma_t^2 k\log k + 5C_6\sigma_t^2 k \log k\notag\\
    &\overset{\mathrm{(d)}}{\le} - \frac{C}{9}\sigma_t^2 k \log k,\notag
\end{align}
where (a) is from the definition~\eqref{eq:defn-Et-x} and the fact that $C_0$ and $C$ are large enough, 
(b) results from $\big\|x_{i(x)}^{*} - x_j^*\big\|_2 \ge  \big\|x_{i(x)}^{*} - x_0\big\|_2/2$, a basic property of the construction of the covering, 
(c) makes use of the definition of $\mathcal{E}_t$ in \eqref{eq:defn-Et-x}, 
and (d) holds provided that $C$ is large enough.
Taking the above bound collectively with inequality~\eqref{eq:etx-bound} yields
\begin{align*}
    \P(\mathcal{E}_t(x) \mid X_t =x) &\le \exp\left(C_6 k\log k\right)\cdot \sup_{x_0 \in \mathcal{E}_t(x), \widetilde{x}_0 \in \B_{i(x)}} \exp \left\{\frac{1}{2\sigma_t^2}\left(\Big\|x - \sqrt{1-\sigma_t^2}\widetilde{x}_0\Big\|_2^2 - \Big\|x - \sqrt{1-\sigma_t^2}x_0\Big\|_2^2\right)\right\}  \notag\\ 
    &\le \exp\left(C_6 k\log k\right) \cdot \exp\left(-\frac{C}{18}k\log k\right) \le \exp\left(-\frac{C}{20}k\log k\right),
\end{align*}
thus concluding the proof of this lemma.


\bibliographystyle{apalike}
\bibliography{reference-notes,reference-diffusion}

\end{document}